\newcommand{\calB}{{\mathcal{B}}}
\newcommand{\calS}{{\mathcal{S}}}
\newcommand{\calK}{{\mathcal{K}}}
\newcommand{\calL}{{\mathcal{L}}}
\newcommand{\reg}{\text{\rm Reg}}
\newcommand{\one}{\boldsymbol{1}}
\newcommand{\e}{\mathbf{e}}
\newcommand{\field}[1]{\mathbb{#1}}
\newcommand{\fR}{\field{R}}
\newcommand{\E}{\field{E}}
\newcommand{\inner}[1]{ \left\langle {#1} \right\rangle }
\newcommand{\inn}[1]{ \langle {#1} \rangle }
\newcommand{\norm}[1]{\left\|{#1}\right\|}
\newcommand{\Ber}{{\text{\rm Ber}}}
\newcommand{\basis}{{\boldsymbol{e}}}
\DeclareMathOperator*{\argmin}{\arg\!\min}
\newcommand{\order}{\ensuremath{\mathcal{O}}}
\newcommand{\otil}{\ensuremath{\widetilde{\mathcal{O}}}}
\newcommand{\LOneBound}{K^{\frac{1}{3}}\sqrt{V_1^{2/3}T_{\phantom{1}}^{1/3}}}
\newcommand{\proj}{{\text{\rm Proj}}}
\newcommand{\dist}{{\text{\rm dist}}}
\newtheorem{assumption}{Assumption}
\newcommand{\pref}[1]{\prettyref{#1}}
\newcommand{\savehyperref}[2]{\texorpdfstring{\hyperref[#1]{#2}}{#2}}
\title[Improved Path-length Regret Bounds for Bandits]{Improved Path-length Regret Bounds for Bandits}
\begin{document}

\maketitle

\begin{abstract}
We study adaptive regret bounds in terms of the variation of the losses (the so-called path-length bounds) for both multi-armed bandit and more generally linear bandit.
We first show that the seemingly suboptimal path-length bound of~\citep{wei2018more} is in fact not improvable for adaptive adversary.
Despite this negative result,
we then develop two new algorithms,
one that strictly improves over~\citep{wei2018more} with a smaller path-length measure,
and the other which improves over~\citep{wei2018more} for oblivious adversary when the path-length is large.
Our algorithms are based on the well-studied optimistic mirror descent framework,
but importantly with several novel techniques,
including new optimistic predictions, a slight bias towards recently selected arms,
and the use of a hybrid regularizer similar to that of~\citep{bubeck2018sparsity}.

Furthermore, we extend our results to linear bandit by showing a reduction to obtaining dynamic regret for a full-information problem,
followed by a further reduction to convex body chasing.
As a consequence we obtain new dynamic regret results as well as the first path-length regret bounds for general linear bandit.
\end{abstract}

\begin{keywords}%
multi-armed bandit, linear bandit, path-length regret bound, optimistic mirror descent,
dynamic regret, convex body chasing
\end{keywords}

\section{Introduction}
The multi-armed bandit (MAB) problem~\citep{auer2002nonstochastic} is a classic online learning problem with partial information feedback.
In the general adversarial environment, 
it is well known that $\Theta(\sqrt{KT})$ is the worst-case optimal regret bound
where $T$ is the number of rounds and $K$ is the number of arms.
Linear bandit generalizes MAB to learning linear loss functions with an arbitrary bounded convex set in $\fR^d$,
and it is also known that $\Theta(d\sqrt{T})$ is the worst-case optimal regret~\citep{dani2008price, bubeck2012towards}.

Despite these worst-case bounds,
several works have studied more adaptive algorithms with data-dependent regret bounds that can be much smaller than the worst-case bounds under reasonable conditions.
For example, recent work~\citep{wei2018more} proposes several such data-dependent regret bounds for MAB, including those that replace the dependence on $T$ by the actual losses of the arms, the variance of the losses, or the variation of the losses measured by the so-called path-length.

In particular, since path-length is the smallest among these different measures,
in this work we focus on extending and improving the existing path-length bounds for bandits.
We start from a curious investigation on whether the bound $\otil(\sqrt{KV_1})$\footnote{We use the notation $\otil(\cdot)$ to suppress poly-logarithmic dependence on $T$.} of~\citep{wei2018more} can be improved, where $V_1 = \E[\sum_{t=2}^T\norm{\ell_t - \ell_{t-1}}_1]$ is the 1-norm path-length and $\ell_1, \ldots, \ell_T$ are the loss vectors chosen by the adversary.  
Indeed, since $V_1$ can be as large as $KT$ and $\Omega(\sqrt{KT})$ is a lower bound for MAB, 
it is very natural to ask whether one can improve the bound $\otil(\sqrt{KV_1})$ to $\otil(\sqrt{V_1})$.

Surprisingly, we show (in \pref{thm:L1_norm_lower_bound}) that the bound $\otil(\sqrt{KV_1})$ is not improvable in general, at least not for an {\it adaptive adversary} who can pick the loss vectors based on the learner's previous actions.
Despite this negative result, however, we also show the following two improvements:
\begin{itemize}
\item First, in \pref{sec:max_norm} we propose a new algorithm with regret bound $\otil(\sqrt{KV_\infty})$ where $V_\infty = \E\big[\sum_{t=2}^T\norm{\ell_t - \ell_{t-1}}_\infty \big]$ is the max-norm path-length.
This is a strict improvement over~\citep{wei2018more} since $V_\infty \leq V_1$,
and moreover it is optimal even for oblivious adversary (that is, adversary who picks loss vectors independently of the learner's actions).

\item Second, building on top of our first algorithm, in \pref{sec:L1_norm} we propose a more sophisticated algorithm with regret bound $\otil\Big(\LOneBound\Big)$ for oblivious adversary.
This improves over~\citep{wei2018more} whenever $V_1 \geq T/K$.
For example when $V_1=T$, our bound becomes $\otil\Big(K^{\frac{1}{3}}\sqrt{T}\Big)$ while the one of \citep{wei2018more} becomes the worst-case bound  $\otil(\sqrt{TK})$.
Note that in light of our aforementioned lower bound,
this also shows a strict distinction between oblivious and adaptive adversary,
which is uncommon in online learning.
\end{itemize}

\renewcommand{\arraystretch}{1.8}
\begin{table}[t]
\centering
\caption{Main results and comparisons with previous works
(see \pref{sec:notation} for notation definition).
For linear bandit, the upper bound with $V_2$ holds when the decision set is a 2-norm ball, and the lower bound with $V_1$ holds when the decision set is a max-norm ball.}
\label{tab:results}
\begin{tabular}{|c|c|c|c|}
\hline
& & Oblivious Adversary & Adaptive Adversary \\
\hline
\multirow{2}{*}{MAB} & 
upper bound & 
\multicolumn{2}{c|}{$\sqrt{KV_\infty}$ (\pref{thm:max_norm_upper_bound}) } \\
\cline{2-4}
& 
lower bound & 
\multicolumn{2}{c|}{$\sqrt{KV_\infty}$} \\
\hline
\multirow{2}{*}{MAB} & 
upper bound & 
$\LOneBound$ (\pref{thm:1_norm_upper_bound}) & 
$\sqrt{KV_1}$ \citep{wei2018more} \\
\cline{2-4}
& 
lower bound & 
$\sqrt{V_1}$ & 
$\sqrt{KV_1}$ (\pref{thm:L1_norm_lower_bound}) \\
\hline
\multirow{2}{*}{\makecell{Linear \\ Bandit}} & 
upper bound & 
\multicolumn{2}{c|}{$d^{3/2}\sqrt{V_2}$ (\pref{cor:linear_bandit} or \ref{cor:linear_bandit2}) ; $d^2\sqrt{V_*}$ (\pref{cor:linear_bandit3})}  \\
\cline{2-4}
& 
lower bound& 
\multicolumn{2}{c|}{$d\sqrt{V_1}$ \citep{dani2008price}} \\
\hline
\end{tabular}
\end{table}

Our algorithms are based on the optimistic mirror descent framework~\citep{chiang2012online, rakhlin2013online}.
However, several novel techniques are needed to achieve our results,
including new optimistic predictions, 
a slight bias towards recently selected arms,
and also a hybrid regularizer.
In particular, our second algorithm dynamically partitions the arms into two groups based on their probabilities of being selected,
and applies different optimistic predictions and essentially different regularizers to these two groups.
This new technique might be of independent interest.

Moreover, in \pref{sec:linear_bandit} we further extend our results to general linear bandit and achieve a regret bound $\otil(d^2\sqrt{V_*})$ where $V_* = \E\big[\sum_{t=2}^T\norm{\ell_t - \ell_{t-1}}_* \big]$ is the path-length measured by some arbitrary dual norm.
When the decision set of the learner is a 2-norm ball, we also obtain an improved bound of order $\otil(d^\frac{3}{2}\sqrt{V_2})$ where $V_2 = \E\big[\sum_{t=2}^T\norm{\ell_t - \ell_{t-1}}_2 \big]$.
Our algorithm is based on optimistic SCRiBLe~\citep{abernethy2008competing, hazan2011better, rakhlin2013online} and the key challenge is to come up with the optimistic prediction under partial information feedback.
We reduce this problem to obtaining dynamic regret~\citep{zinkevich2003online}  for a full-information online learning problem, and then further reduce the latter to an instance of convex body chasing~\citep{friedman1993convex}.
We discuss the implications of existing results through our reduction chain, and also propose a simple greedy approach for chasing convex sets with squared 2-norm, leading to the stated path-length bound $\otil(d^{\frac{3}{2}}\sqrt{V_2})$ for linear bandit.

Our main results are summarized in \pref{tab:results},
where the two lower bounds without references are direct implications from known results as discussed in \pref{sec:MAB}. 

\subsection{Related work}
Path-length regret bounds were studied in~\citep{chiang2012online, steinhardt2014adaptivity} for full information problems,
and in~\citep{wei2018more} for MAB and semi-bandit. 
\citet{chiang2013beating} and~\citet{Yang2016tracking} also studies path-length bounds for a partial information setting under the easier two-point bandit feedback.
Our result in \pref{sec:linear_bandit} is the first path-length bound for general linear bandit as far as we know.

Dynamic regret bounds of~\citep{besbes2015non, wei2016tracking} for MAB are also expressed in terms of some path-length measure.
However, the bound is much weaker compared to ours since dynamic regret is a stronger benchmark.
For example, results of~\citep{wei2016tracking} only imply a bound $\order(\sqrt{KTV_\infty})$ for our problem, which is linear in $T$ in the worst case.

Hybrid regularizer was first proposed by~\cite{bubeck2018sparsity} for sparse bandit and bandit with variance bound,
and was also recently used in~\citep{luo2018efficient} for online portfolio.
Our hybrid regularizer is similar to the one of~\citep{bubeck2018sparsity} which is a combination of  Shannon entropy and log-barrier,
but importantly the weight for log-barrier is much larger than that of~\citep{bubeck2018sparsity}.
The purpose of the hybrid regularizer and the role it plays in the analysis are also very different in all these three works.

As mentioned we show a strict distinction between oblivious and adaptive adversary, which is uncommon in online learning.
The other two examples are online learning with switching costs~\citep{cesa2013online}
and best-of-both-worlds results for MAB~\citep{auer2016algorithm}.

\subsection{Problem setup and notation}\label{sec:notation}
The multi-armed bandit problem proceeds for $T$ rounds with $K \leq T$ fixed arms.
In each round $t$, the learner selects one arm $i_t \in [K] \triangleq \{1, 2, \ldots, K\}$,
and simultaneously the adversary decides the loss vector $\ell_t \in [0, 1]^K$ where $\ell_{t,i}$ is the loss for arm $i$ at time $t$.
If $\ell_t$ is selected independently of the learner's previous actions $i_1, \ldots, i_{t-1}$, then the adversary is said to be {\it oblivious};
otherwise the adversary is {\it adaptive}.
In the end of round $t$, the learner suffers and observes the loss of the selected arm $\ell_{t, i_t}$.

The learner's goal is to minimize her (pseudo) regret, 
which is the gap between her total loss and the loss of the best fixed arm,
formally defined as
\[
\reg \triangleq \max_{i^\star \in [K]} 
\E\left[
\sum_{t=1}^T \ell_{t, i_t} - \sum_{t=1}^T \ell_{t, i^\star}
\right]
\]
where the expectation is with respect to the randomness of both the learner and the adversary.

We also consider the more general linear bandit problem,
where the learner's decision set is an arbitrary convex compact set $\Omega \subset \fR^d$.
At each time $t$, the learner picks an action $w_t \in \Omega$
and simultaneously the adversary picks a linear loss function parametrized by $\ell_t \in \calL \subset \fR^d$.
The learner suffers and observes the linear loss $\inner{w_t, \ell_t}$.
Without loss of generality, we assume that $\Omega$ is contained in a unit ball $\calB \triangleq \{z\in \fR^d: \norm{z}\leq 1\}$ for some arbitrary norm $\norm{\cdot}$ and $\calL$ is contained in the dual norm ball $\calB_* \triangleq \{z\in \fR^d: \norm{z}_* \leq 1\}$
(thus the magnitude of the loss for any action is always bounded by $1$).
For linear bandit we consider general adaptive adversary so 
$\ell_t$ can depend on $w_1, \ldots, w_{t-1}$.
The (pseudo) regret is defined in a similar way
\[
\reg \triangleq \max_{w^\star \in \Omega} 
\E\left[
\sum_{t=1}^T \inner{w_t, \ell_t} - \sum_{t=1}^T \inner{w^\star, \ell_t}
\right]
\]
where again the expectation is with respect to the randomness of both the learner and the adversary.

As mentioned we study adaptive regret bounds that depend on the variation
of the loss sequence $\ell_1, \ldots, \ell_T$,
measured by it path-length $V_p = \E\left[\sum_{t=1}^T \norm{\ell_t - \ell_{t-1}}_p\right]$ for some $p \geq 1$,
where we define $\ell_0 = \mathbf{0}$ to be the all-zero vector
and the expectation is taken with respect to the randomness of the adversary
as well as the randomness of the learner in the case of adaptive adversary.
In particular, we consider path-length $V_1$ and $V_\infty$ for MAB and $V_2$ for linear bandit.
Note $V_\infty \leq V_2 \leq V_1$ and also $V_1 \leq K V_\infty$.
For linear bandit, we also consider path-length measured by a general dual norm and denote it by $V_* = \E\left[\sum_{t=1}^T \norm{\ell_t - \ell_{t-1}}_*\right]$.
For simplicity we assume that these quantities are known when tuning the optimal learning rate.

For MAB, we define $\rho_i(t) \triangleq \max\{s \leq t: i_s = i\}$ (or 0 if the set is empty) as the most recent time arm $i$ is selected (prior to round $t+1$).
We use $\basis_i$ to denote the standard basis vector in $K$ dimension with coordinate $i$ being 1 and others being 0.

\section{Path-length Bounds for Multi-armed Bandit}
\label{sec:MAB}

In this section we first show path-length lower bounds for MAB,
followed by our proposed algorithms with new upper bounds.

First note that $\Omega(\sqrt{KV_\infty})$ is a trivial lower bound
for oblivious adversary (and thus also for the more powerful adaptive adversary)
in light of the standard $\sqrt{KT}$ lower bound construction for MAB.
Indeed, for any $\gamma \in [K/T,1]$ and any MAB algorithm,
one can find a loss sequence with $V_\infty = \order(T\gamma)$
and $\reg = \Omega(\sqrt{KT\gamma}) = \Omega(\sqrt{KV_\infty})$,
just by using the standard lower bound construction~\citep{auer2002nonstochastic}
for a game with $T\gamma$ rounds as the first $T\gamma$ rounds,
and setting all losses to be zero for the rest.
Since $KV_\infty \geq V_1$, $\Omega(\sqrt{V_1})$ is clearly also a lower bound.

However, it turns out that for adaptive adversary,
one can prove a stronger lower bound in terms of $V_1$,
as shown in the following theorem.

\begin{theorem}\label{thm:L1_norm_lower_bound}
For any $\gamma \in [K/T,1]$ and any MAB algorithm, there exists an adaptively chosen sequence of $\ell_t$ such that $V_1 = \order(T\gamma)$
and $\reg = \Omega(\sqrt{KT\gamma}) = \Omega(\sqrt{KV_1})$.
\end{theorem}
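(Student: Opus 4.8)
The plan is to reduce to a core case and then exhibit an explicit adaptive instance. Set $L\defeq\lfloor T\gamma\rfloor$; since $\gamma\ge K/T$ we have $K\le L\le T$, and it suffices to construct an adaptive adversary that plays nontrivially only on the first $L$ rounds (setting $\ell_t=\mathbf{0}$ for $t>L$, which adds at most one jump of size $\norm{\ell_L}_1\le K=\order(L)$ to $V_1$ and leaves the best arm unchanged) so that on those rounds $V_1=\order(L)$ and the regret is $\Omega(\sqrt{KL})=\Omega(\sqrt{KT\gamma})$. The adversary draws a hidden good arm $i^\star\in[K]$ uniformly and plays a \emph{refresh-on-pull} rule with a parameter $\ve=\Theta(\sqrt{K/L})$ (a small enough constant times $\sqrt{K/L}$, which is in particular a valid bias): the initial losses are independent with $\ell_{1,i}\sim\Ber(\half)$ for $i\ne i^\star$ and $\ell_{1,i^\star}\sim\Ber(\half-\ve)$; and for $t\ge2$, $\ell_{t,i}=\ell_{t-1,i}$ for every $i\ne i_{t-1}$ while $\ell_{t,i_{t-1}}$ is drawn afresh from $\Ber(\half)$ if $i_{t-1}\ne i^\star$ and from $\Ber(\half-\ve)$ if $i_{t-1}=i^\star$. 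This is a legal adaptive strategy since $\ell_t$ depends only on $i_1,\dots,i_{t-1}$.

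The path-length is controlled for free: for $t\ge2$ the vectors $\ell_t$ and $\ell_{t-1}$ differ only in coordinate $i_{t-1}$, so $\norm{\ell_t-\ell_{t-1}}_1\le1$, and conversely $\E\norm{\ell_t-\ell_{t-1}}_1\ge\half-\ve$ because the refreshed coordinate takes a fresh Bernoulli value; together with $\norm{\ell_1}_1\le K$ and the one trailing jump, this gives $V_1=\Theta(L)=\Theta(T\gamma)$.

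For the regret, write $\field{P}$ for the law of the whole interaction under the randomized adversary above, $\field{P}_j$ for the law when the hidden arm is a fixed $j$, and $\field{P}_0$ for the law when every arm's samples are $\Ber(\half)$; note that the per-instance regret averaged over the hidden arm is at least $\E_{\field{P}}\big[\sum_t\ell_{t,i_t}-\sum_t\ell_{t,i^\star}\big]$. Two facts give the bound. (i) Under the refresh-on-pull rule, $\ell_{t,i}$ always equals the most recent fresh sample of arm $i$, and that sample is drawn strictly after the pull that triggered it, hence is independent of everything the learner can act on before the next pull of $i$; decomposing $\sum_{t\le L}\ell_{t,i}$ into the fresh samples of arm $i$ weighted by how long each stays active and using this independence shows $\E_{\field{P}}[\sum_{t\le L}\ell_{t,i^\star}]=(\half-\ve)L$ and $\E_{\field{P}}[\ell_{t,i_t}]=\half-\ve\,\field{P}(i_t=i^\star)$, so that $\E_{\field{P}}\big[\sum_t\ell_{t,i_t}-\sum_t\ell_{t,i^\star}\big]=\ve\big(L-\E_{\field{P}}[N_{i^\star}]\big)$, where $N_i$ counts the pulls of arm $i$. (ii) Every observed value of arm $i$ is one fresh sample of that arm, and at most $N_i+1$ fresh samples of arm $i$ are ever drawn, so the divergence decomposition gives $\mathrm{KL}(\field{P}_0\,\|\,\field{P}_j)\le\order(\ve^2)\,\E_{\field{P}_0}[N_j+1]$, and Pinsker's inequality yields $\E_{\field{P}_j}[N_j]\le\E_{\field{P}_0}[N_j]+\order(L\ve)\sqrt{\E_{\field{P}_0}[N_j]+1}$; summing over $j$, using $\sum_j\E_{\field{P}_0}[N_j]=L$, Cauchy--Schwarz, and $K\le L$ gives $\E_{\field{P}}[N_{i^\star}]=\tfrac1K\sum_j\E_{\field{P}_j}[N_j]\le\tfrac LK+\order\big(L\ve\sqrt{L/K}\big)$. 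With $\ve=\Theta(\sqrt{K/L})$ and a small enough constant this is at most $\tfrac34L$, hence the averaged regret is $\ge\ve L/4=\Omega(\sqrt{KL})$, and some realization of $i^\star$ yields an adaptive adversary with $\reg=\Omega(\sqrt{KL})=\Omega(\sqrt{KT\gamma})$ and $V_1=\order(T\gamma)$.

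The main obstacle is the information-theoretic bookkeeping in (i)--(ii): one must verify that the one-round delay between a pull and the corresponding refresh leaks no extra information, that the stale value of $i^\star$ between its pulls still has mean $\half-\ve$ at every round, and that therefore the refresh-on-pull dynamics behave exactly like one fresh i.i.d.\ sample per pull of each arm, so the standard $\sqrt{KT}$ lower-bound machinery of~\citep{auer2002nonstochastic} applies. This is precisely the point where adaptivity is essential: with an oblivious, fixed refresh schedule the learner could ``surf'' each arm's low values in the stretches between its scheduled refreshes and undercut the fixed comparator, so the adversary must refresh exactly the arm that was just pulled.
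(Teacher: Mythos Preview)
Your proposal is correct and follows essentially the same approach as the paper: the construction is the identical refresh-on-pull adversary with a hidden biased arm, the path-length bound is the same (only the pulled coordinate changes each round), and the regret argument is the same reduction to the standard i.i.d.\ lower bound via the observation that every observed loss is a fresh, previously-unrevealed Bernoulli sample. The paper's proof is terser, simply asserting that ``the regret of the algorithm is exactly the same as in the case where fresh samples are drawn every time'' and deferring to~\citep{auer2002nonstochastic}, whereas you expand out the divergence decomposition and Pinsker step explicitly; but the substance is the same.
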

\begin{proof}
The construction of the loss sequence is as follows.
First uniformly at random pick an arm $i^\star$ as the ``good'' arm. 
Then for each time $t$ and each $i \in [K]$, set
\[
\ell_{t, i} =
\begin{cases}
0, \quad&\text{if $t > T\gamma$} \\
\ell_{t - 1, i}, \quad&\text{else if $t > 1$ and $i \neq i_{t-1}$,} \\
\text{a fresh sample drawn from $\Ber(0.5)$}, \quad&\text{else if $i \neq i^\star$,} \\
\text{a fresh sample drawn from $\Ber\left(0.5 - \frac{1}{4}\sqrt{\frac{K}{T\alpha}}\right)$,} \quad&\text{else.}
\end{cases}
\]
Note that the construction is so that only the first $T\gamma$ rounds matter clearly,
and more importantly for these rounds the regret of the algorithm
is exactly the same as in the case where fresh samples are drawn every time
according to $\Ber\left(0.5 - \frac{1}{4}\sqrt{\frac{K}{T\gamma}}\right)$ for the good arm
and $\Ber(0.5)$ for the others,
because the loss of each arm is a fresh sample from the algorithm's perspective
until this arm is picked and the loss is observed
(in which case a new sample is drawn).
Standard MAB lower bound proofs (see~\citep{auer2002nonstochastic}) show that the regret in this case is $\Omega(\sqrt{KT\gamma})$.
On the other hand, it is also clear that under this construction we have $V_1 = \order(T\gamma)$ since only one coordinate of the loss vector changes for each time $t \leq T\gamma$, which finishes the proof.
\end{proof}

\pref{thm:L1_norm_lower_bound} shows that the algorithm of~\citet[Section 4.1]{wei2018more} is optimal in terms of $V_1$ path-length bound for adaptive adversary.
In the next two subsections we respectively show improvements in terms of $V_\infty$ path-length and oblivious adversary.

\subsection{Improved bounds in terms of $V_\infty$}
\label{sec:max_norm}

We propose a new algorithm that improves the result of~\citet{wei2018more} from $\otil(\sqrt{KV_1})$ to $\otil(\sqrt{KV_\infty})$ for both oblivious and adaptive adversary.
Similar to~\citep{wei2018more}, our algorithm is also based on the optimistic mirror descent framework~\citep{chiang2012online, rakhlin2013online}.
Specifically, optimistic mirror descent for general linear bandit over a decision set $\Omega$ maintains two sequences $x_1, \ldots, x_T$ and $x_1', \ldots, x_T'$
based on the following update rules:
\begin{align*}
x_t &= \argmin_{x\in\Omega} \inner{x, m_t} + D_\psi(x, x_t'), \\
x_{t+1}'  &= \argmin_{x\in\Omega} \inn{x, \hat{\ell}_t} + D_\psi(x, x_t'), 
\end{align*}
where $m_t$ and $\hat{\ell}_t$ are respectively some optimistic prediction and unbiased estimator for the true loss vector $\ell_t$,
$\psi$ is some convex differentiable regularizer and $D_\psi(x, y) = \psi(x) - \psi(y) - \inner{\nabla\psi(y), x - y}$ is the Bregman divergence with respect to $\psi$.

For MAB, $\Omega = \Delta_K$ is the simplex of distributions over $K$ arms
and $\hat{\ell}_t$ is usually set to the unbiased estimator with 
$\hat{\ell}_{t, i} = \frac{\ell_{t,i} - m_{t,i}}{w_{t, i}}\one\{i_t = i\} + m_{t, i}$
where the selected arm $i_t$ is drawn from some final sample distribution $w_t$ (computed based on $x_t$).
\citet{wei2018more} use the log-barrier $\psi(x) = \frac{1}{\eta}\sum_{i=1}^K \ln\frac{1}{x_i}$ with some learning rate $\eta$ as the regularizer and $w_t = x_t$, 
and prove that the regret is bounded by (ignoring constants):
$\frac{K\ln T}{\eta} + \eta\sum_{t=1}^T (\ell_{t,i_t} - m_{t, i_t})^2$.
With $m_{t,i} = \ell_{\rho_i(t-1), i}$ 
(that is, the most recently observed loss for arm $i$),
it is further shown that the regret bound above is at most $\otil(\sqrt{KV_1})$
with the optimal learning rate $\eta$.

Our algorithm makes the following two modifications (see \pref{alg:max_norm} for pseudocode).
First, we simply use the observed loss at time $t$ as the optimistic prediction for {\it all} arms at time $t+1$. 
Formally, we set $m_{t+1,i} = c_{t} \triangleq \ell_{t, i_t}$ for all $i$.
Note that in this case $\inner{x, m_t} = c_{t-1}$ for any $x\in \Delta_K$ and thus $x_t = \argmin_{x\in\Delta_K} \inner{x, m_t} + D_\psi(x, x_t') = x_t'$, meaning that we only need to maintain one sequence (\pref{line:update_of_x}).
Second, instead of using $x_{t+1}$ to sample $i_{t+1}$,
we slightly bias towards the most recently picked arm by moving a small fraction $\alpha_{t+1}$ of each arm's weight to arm $i_{t}$,
where $\alpha_{t+1} = \frac{\alpha(1-c_{t})}{1+\alpha(1-c_{t})}$ for some fixed parameter $\alpha$
(\pref{line:bias}).
Note that the smaller the loss of arm $i_{t}$ is, the more we bias towards this arm,
but the correlation is in some nonlinear form.
Such bias is intuitive in a slowly changing environment where we expect a good arm remains reasonably good for a while. 
In the next theorem we formally prove the improved regret bound of our algorithm.

\begin{algorithm}[t]
\SetAlgoLined
\caption{}
\label{alg:max_norm}
\textbf{Define}: $\psi(x)= \frac{1}{\eta}\sum_{i=1}^K \ln\frac{1}{x_i}$ for some learning rate $\eta$; parameter $\alpha \in (0,1)$.  \\
\textbf{Initialize}: $w_1$ is the uniform distribution, $c_0 = 0$. \\
\For{$t=1, 2, \ldots, T$}{
\nl 	Play $i_t \sim w_t$ and observe $c_{t} = \ell_{t, i_t}$.  \\
\nl 	Construct unbiased estimator $\hat{\ell}_t$ s.t.  $\hat{\ell}_{t,i}= \frac{\ell_{t,i} - c_{t-1}}{w_{t,i}}\one\{i_t=i\} + c_{t-1}$ for all $i$. \\
\nl 	Update $x_{t+1} = \argmin_{x \in \Delta_K} \inn{x, \hat{\ell}_t} + D_\psi(x, x_t)$.  \label{line:update_of_x} \\
\nl 	$w_{t+1} = (1-\alpha_{t+1}) x_{t+1} + \alpha_{t+1} \basis_{i_t}$, where $\alpha_{t+1} = \frac{\alpha(1-c_t)}{1+\alpha(1-c_t)}$. \label{line:bias}
}    
\end{algorithm}

\begin{theorem}\label{thm:max_norm_upper_bound}
\pref{alg:max_norm} with $\eta \leq \frac{1}{162}$ and $\alpha = 8\eta$ 
ensures 
\[
\reg = \order\left(\frac{K\ln T}{\eta} + \eta\E\left[\sum_{t=1}^{T-1}|\ell_{t+1, i_{t}} - \ell_{t, i_{t}}|\right] \right) 
= \order\left(\frac{K\ln T}{\eta} + \eta V_\infty \right) 
\]
for adaptive adversary. 
Picking the optimal $\eta$ leads to regret bound $\order(\sqrt{K V_\infty \ln T} + K\ln T)$.
\end{theorem}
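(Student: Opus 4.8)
The plan is to run the log-barrier optimistic mirror descent analysis on the iterates $x_t$, separately account for the cost of sampling from the biased distribution $w_t$ instead of $x_t$, and then fuse the two overhead terms into the single quantity $\E[\sum_t|\ell_{t+1,i_t}-\ell_{t,i_t}|]$, which is trivially at most $V_\infty$ since $|\ell_{t+1,i_t}-\ell_{t,i_t}|\le\|\ell_{t+1}-\ell_t\|_\infty$ pointwise. First I would compare against the shrunk comparator $\widetilde u = (1-\tfrac1T)\basis_{i^\star} + \tfrac1{TK}\one$, so that $\reg \le O(1) + \E[\sum_t\inner{w_t-\widetilde u,\ell_t}]$ and $D_\psi(\widetilde u, x_1)\le\tfrac{K\ln T}{\eta}$, and split $\inner{w_t-\widetilde u,\ell_t} = \inner{x_t-\widetilde u,\ell_t} + \inner{w_t-x_t,\ell_t}$. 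Since $\hat\ell_t$ is conditionally unbiased for $\ell_t$ (even for an adaptive adversary, conditioning on the history through the adversary's choice of $\ell_t$) and $x_t$ is history-measurable, $\E[\sum_t\inner{x_t-\widetilde u,\ell_t}] = \E[\sum_t\inner{x_t-\widetilde u,\hat\ell_t}]$, and I would bound the latter by the optimistic-OMD inequality. The crucial feature of the first modification is that $m_t = c_{t-1}\one$ is constant over $\Delta_K$, so the optimistic step is vacuous ($x_t = x_t'$) and the effective gradient in the stability term is $g_t = \hat\ell_t - m_t$, supported only on coordinate $i_t$ with value $(\ell_{t,i_t}-c_{t-1})/w_{t,i_t}$. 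The standard log-barrier stability lemma applies because $\eta\le\tfrac1{162}$ and $\alpha = 8\eta$ force $\eta|g_{t,i_t}|x_{t,i_t}\le\eta/(1-\alpha_t)\le\tfrac12$, yielding after taking conditional expectations $\E[\text{stability}_t\mid\mathcal F_{t-1}]\le C\eta\sum_i\tfrac{x_{t,i}^2}{w_{t,i}}(\ell_{t,i}-c_{t-1})^2$ for a universal constant $C$.

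Next I would split that inner sum at $i = i_{t-1}$. For $i = i_{t-1}$ the bias guarantees $w_{t,i_{t-1}}\ge(1-\alpha_t)x_{t,i_{t-1}}$, so $\tfrac{x_{t,i_{t-1}}^2}{w_{t,i_{t-1}}}(\ell_{t,i_{t-1}}-c_{t-1})^2\le\tfrac1{1-\alpha_t}|\ell_{t,i_{t-1}}-\ell_{t-1,i_{t-1}}|$; summed over $t$ and reindexed this is $O(\eta)\,\E[\sum_t|\ell_{t+1,i_t}-\ell_{t,i_t}|]$, exactly the target. For $i\ne i_{t-1}$ one has the identity $\tfrac{x_{t,i}^2}{w_{t,i}} = \tfrac{x_{t,i}}{1-\alpha_t}$, and I would pair these terms with the bias overhead, which simplifies to $\inner{w_t-x_t,\ell_t} = \alpha_t\inner{\basis_{i_{t-1}}-x_t,\ell_t} = \alpha_t\sum_{i\ne i_{t-1}}x_{t,i}(\ell_{t,i_{t-1}}-\ell_{t,i})$. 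Writing $a = \ell_{t,i}-c_{t-1}$ and $p = \ell_{t,i_{t-1}}-\ell_{t-1,i_{t-1}}$ (so $\ell_{t,i_{t-1}}-\ell_{t,i} = p-a$), the combined per-$i$ contribution becomes $x_{t,i}\big(\tfrac{C\eta}{1-\alpha_t}a^2 - \alpha_t a + \alpha_t p\big)$: the $\alpha_t p$ piece again sums to $O(\eta V_\infty)$, and what remains is the quadratic $\tfrac{C\eta}{1-\alpha_t}a^2 - \alpha_t a$, weighted by $x_{t,i}$ and multiplied out by $\alpha_t$.

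The main obstacle is controlling $\sum_t\alpha_t\sum_{i\ne i_{t-1}}x_{t,i}\big(\tfrac{C\eta}{\alpha_t(1-\alpha_t)}a^2 - a\big)$. When $a\ge0$ the bracket is non-positive as soon as $\tfrac{C\eta}{\alpha_t(1-\alpha_t)}a\le1$ over the entire range $a\in[0,1-c_{t-1}]$; since $\alpha_t = \alpha(1-c_{t-1})/(1+\alpha(1-c_{t-1}))$ with $\alpha = 8\eta$, this reduces to a numerical inequality in $\eta$ and $c_{t-1}$ that holds precisely for $\eta\le\tfrac1{162}$ — this is the role of that constant and of the nonlinear tuning of $\alpha_t$: the $(1-c_{t-1})$ factor is exactly what makes the cutoff scale with the remaining range $1-c_{t-1}$. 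The delicate case is $a<0$, i.e.\ arms that look better at time $t$ than $i_{t-1}$ looked at time $t-1$; here one uses $|a|\le c_{t-1}$ and $\alpha_t\le\alpha(1-c_{t-1})$, together with $|a|\le|\ell_{t,i}-\ell_{t-1,i}| + (\ell_{t-1,i_{t-1}}-\ell_{t-1,i})^+$, turning the first part into a $\|\ell_t-\ell_{t-1}\|_\infty$ path-length contribution and absorbing the second into the previous round's stability and the comparator error; making this accounting tight is the part requiring the most care. Once $\reg = O(K\ln T/\eta + \eta V_\infty)$ is established, choosing $\eta = \min\{\tfrac1{162},\sqrt{K\ln T/V_\infty}\}$ gives $\reg = O(\sqrt{KV_\infty\ln T} + K\ln T)$, the additive $K\ln T$ term corresponding to the regime where the cap on $\eta$ is active.
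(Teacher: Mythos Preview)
Your overall decomposition---OMD analysis for the $x_t$ sequence plus the cost of biasing to $w_t$---matches the paper, and your handling of the $i=i_{t-1}$ contribution and of the $a\ge 0$ case is fine. The gap is in the $a<0$ case, and it is not a matter of care but of structure: the arm-by-arm accounting you set up cannot close.

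Concretely, after you take conditional expectation the leftover is
\[
\sum_{i\ne i_{t-1}} x_{t,i}\Big(\tfrac{C\eta}{1-\alpha_t}\,a_i^2 - \alpha_t a_i\Big),
\qquad a_i=\ell_{t,i}-c_{t-1}.
\]
For $a_i<0$ both summands are positive, and neither $|a_i|\le c_{t-1}$ nor the triangle-inequality split $|a_i|\le|\ell_{t,i}-\ell_{t-1,i}|+|\ell_{t-1,i}-\ell_{t-1,i_{t-1}}|$ rescues you: the second piece $|\ell_{t-1,i}-\ell_{t-1,i_{t-1}}|$ is a \emph{cross-arm} gap at a fixed time, not a path-length, and there is nothing in the previous round's stability or in the comparator error that matches it. A constant loss vector with several distinct values already produces, round after round, a positive residual of order $\eta$ whenever the previously played arm was not the smallest-loss arm; since $V_\infty=O(1)$ in that scenario, bounding the residual by $O(\eta V_\infty)$ would require a genuine cancellation you have not identified.

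The paper avoids this entirely by \emph{not} taking the conditional expectation of the stability term. It keeps it as the random quantity $4\eta(c_t-c_{t-1})^2$ with $c_t=\ell_{t,i_t}$, and then writes the bias using the \emph{same} random $c_t$ via the identity
\[
\E[\inner{w_t-x_t,\ell_t}]
=\E\Big[\tfrac{\alpha_t}{1-\alpha_t}\inner{\basis_{i_{t-1}}-w_t,\ell_t}\Big]
=\alpha\,\E\big[(1-c_{t-1})(\ell_{t,i_{t-1}}-c_t)\big],
\]
where the last step uses $\tfrac{\alpha_t}{1-\alpha_t}=\alpha(1-c_{t-1})$ and $\E_{i_t}[\ell_{t,i_t}]=\inner{w_t,\ell_t}$. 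Splitting $\ell_{t,i_{t-1}}-c_t=(\ell_{t,i_{t-1}}-c_{t-1})+(c_{t-1}-c_t)$ yields exactly the path-length piece $\alpha|\ell_{t,i_{t-1}}-\ell_{t-1,i_{t-1}}|$ plus $\alpha(c_{t-1}-c_t-c_{t-1}^2+c_{t-1}c_t)$; with $\alpha=8\eta$ one gets
\[
4\eta(c_t-c_{t-1})^2+8\eta(-c_{t-1}^2+c_{t-1}c_t)=4\eta(c_t^2-c_{t-1}^2),
\]
which telescopes. This is the mechanism that makes the nonlinear choice of $\alpha_t$ work; your per-arm expansion destroys it. If you redo the bias step keeping $c_t$ inside the expectation rather than unpacking $\inner{w_t-x_t,\ell_t}$ deterministically, the proof goes through with no case analysis on the sign of $a$.
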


\begin{proof}
We first analyze the regret of the sequence $x_1, \ldots, x_T$ using the analysis of~\citep{wei2018more}.
Specifically by their Theorem~7 and our choice of $m_t$ and $\hat{\ell}_t$,
we have for any arm $i$,
\begin{align}
    \E\left[\sum_{t=1}^T\inn{x_t-\basis_i, \ell_t}\right]    \nonumber 
    &\leq \order\left(\frac{K\ln T}{\eta}\right) + 3\eta \E\left[ \sum_{t=1}^T\sum_{i=1}^K x_{t,i}^2(\hat{\ell}_{t,i}-c_{t-1})^2 \right]   \nonumber \\
    & = \order\left(\frac{K\ln T}{\eta}\right) + 3\eta\E\left[\sum_{t=1}^T \frac{x_{t,i_t}^2}{w_{t,i_t}^2}(c_{t}-c_{t-1})^2 \right] \nonumber \\
    & \leq \order\left(\frac{K\ln T}{\eta}\right) + 4\eta\E\left[\sum_{t=1}^T (c_{t}-c_{t-1})^2\right],  \label{eqn: xt-ei}
\end{align}
where in the last step we use $\frac{x_{t,i}}{w_{t,i}}\leq \frac{1}{1-\alpha_t}\leq 1 + \alpha = 1+ 8\eta \leq \sqrt{4/3}$ by our choice of $\eta$. 
In the rest of the proof we analyze the difference between using $x_t$ and $w_t$.
Specifically we prove
\begin{equation}\label{eqn:bias}
\E\left[\sum_{t=1}^T \inn{w_{t}-x_{t}, \ell_{t}}\right]  
\leq \order(1) + \alpha\E\left[\sum_{t=2}^T |\ell_{t, i_{t-1}} - \ell_{t-1, i_{t-1}}| - \frac{1}{2}\sum_{t=2}^T (c_t - c_{t-1})^2 \right],
\end{equation}
which finishes the proof by combining the two inequalities above
and using $\alpha = 8\eta$.
Indeed, observe that for each time $t > 1$, we have by the definition of $w_t$
\[
\E\left[\inn{w_{t}-x_{t}, {\ell}_{t}}\right] 
= \E\left[\alpha_t\inn{\e_{i_{t-1}}-x_{t},  {\ell}_{t}}\right]
= \E\left[\alpha_t \inn{\e_{i_{t-1}}-w_t, {\ell}_t}\right] 
+ \E\left[\alpha_t \inn{w_t-x_t, {\ell}_t}\right].
\]
Rearranging and plugging the definition of $\alpha_t$ gives
\begin{align*}
    \E\left[\inn{w_{t}-x_{t}, {\ell}_{t}}\right] 
    &=\E\left[\frac{\alpha_t}{1-\alpha_t} \inn{\e_{i_{t-1}}-w_t, {\ell}_t}\right]  \\
    &= \alpha\E\left[(1 - c_{t-1}) \inn{\e_{i_{t-1}}-w_t, {\ell}_t}\right]  \\
    &= \alpha\E\left[(1 - c_{t-1}) (\ell_{t, i_{t-1}} - c_t) \right]  \\
    &= \alpha\E\left[(1 - c_{t-1}) (\ell_{t, i_{t-1}} - c_{t-1} + c_{t-1} - c_t) \right]  \\
    &\leq \alpha\E\left[|\ell_{t, i_{t-1}} - c_{t-1}| + (1-c_{t-1})(c_{t-1}-c_t) \right]  \\
    &= \alpha\E\left[|\ell_{t, i_{t-1}} - c_{t-1}| + (c_{t-1} - c_t - c_{t-1}^2 + c_{t-1}c_t) \right].  
\end{align*}
Summing over $t$, and combining \pref{eqn: xt-ei}, we can bound the regret by (recall $c_{t-1} = \ell_{t-1, i_{t-1}}$)
\begin{align*}
    &\mathcal{O}\left(\frac{K\ln T}{\eta} \right) + 8\eta\E\left[\sum_{t=2}^T |\ell_{t, i_{t-1}}-\ell_{t-1, i_{t-1}}|  \right] 
            + \E\left[4\eta \sum_{t=2}^T (c_t-c_{t-1})^2 + 8\eta \sum_{t=2}^T (-c_{t-1}^2+c_{t-1}c_t)\right] \\
    &=\mathcal{O}\left(\frac{K\ln T}{\eta} \right) + 8\eta\E\left[\sum_{t=2}^T |\ell_{t, i_{t-1}}-\ell_{t-1, i_{t-1}}|  \right], \tag{telescoping} 
\end{align*}
which finishes the proof.
\end{proof}

\subsection{Improved bounds in terms of $V_1$ for oblivious adversary}
\label{sec:L1_norm}

Next we come back to $V_1$ path-length bound and show that despite the lower bound of \pref{thm:L1_norm_lower_bound} for adaptive adversary,
one can still improve the regret for oblivious adversary when the path-length is large.

Our algorithm (see \pref{alg:L1_norm} for the pseudocode) still follows the general optimistic mirror descent framework (\pref{line:OMD1} and \pref{line:OMD2}).
The novelty is that we divide all arms into two groups: the minority group $\calS_t$ consisting of arms with weight $x_{t,i}$ smaller than some parameter $\beta$,
and the majority group $[K]\backslash\calS_t$.\footnote{%
The concept of majority and minority groups is reminiscent of the recent work~\citep{allen2018make} on first-order regret bounds for contextual bandits.
}
At a high level our algorithm uses the same strategy as \pref{alg:max_norm} for the minority group and a different strategy for the majority group,
discussed in detail below.

\paragraph{Optimistic prediction (\pref{line:optimistic_prediction}).}
For arms in the minority group $\calS_t$, we use $m_{t+1, i} = c_{\tau(t)}$ as the optimistic prediction for time $t+1$, 
where $\tau(t) \triangleq \max\{\tau \leq t: i_\tau \in \calS_{\tau-1}\}$ is basically the most recent time we selected a minority arm,
and $c_t \triangleq\ell_{t, i_t}$ is the loss of the algorithm at time $t$ (same as \pref{alg:max_norm}).
On the other hand, for arms in the majority group $[K]\backslash \calS_t$,
just like \citep{wei2018more} we use their most recently observed loss as the optimistic prediction, that is, $m_{t+1, i} = \ell_{\rho_i(t), i}$.
This is very natural since intuitively majority arms are selected more often by definition
and therefore their last observed loss could be a good proxy for the current loss.

\paragraph{Slight bias (\pref{line:bias2}).}
Among the minority, we also bias towards the most recently picked one $i_{\tau(t)}$ by moving a fraction $\alpha_{t+1}$ of the weights of all arms in $\calS_t \backslash \{i_{\tau(t)}\}$ to arm $i_{\tau(t)}$, just in the same way as \pref{alg:max_norm}.\footnote{It is possible that $i_{\tau(t)}$ is not in the current minority group $\calS_t$ though.}
For the rest of the arms we simply set $w_{t+1, i} = x_{t+1, i}$.

\paragraph{Regularizer.}
Our algorithm uses a hybrid regularizer $\psi(x)= \frac{1}{\eta}\sum_{i=1}^K \ln\frac{1}{x_i} + \frac{K}{\eta}\sum_{i=1}^K x_i\ln x_i$. 
Roughly speaking, in our analysis we apply the log-barrier $\frac{1}{\eta}\sum_{i=1}^K \ln\frac{1}{x_i}$ to the minority group
and the (negative) Shannon entropy $\frac{K}{\eta}\sum_{i=1}^K x_i\ln x_i$ to the majority group (see \pref{lem:OMD}). 
This regularizer is similar to the one used by~\citet{bubeck2018sparsity}.
The difference is that in~\citep{bubeck2018sparsity},
the purpose of the hybrid regularizer is to ensure that the algorithm is stable 
(in the sense of \pref{lem:stability}),
and for that purpose it suffices to set the coefficient of the log-barrier to be as small as $K$.
On the other hand, 
we use a much larger coefficient $\frac{1}{\eta}$ for the log-barrier.
In fact, this is in some sense the most natural choice since it leads to the smallest variance term for mirror descent while keeping the same regularization overhead as the entropy part (see \pref{lem:OMD} for details).
Our analysis exactly exploits the smaller variance for the minority group due to this hybrid regularizer. 

\begin{algorithm}[t]
\DontPrintSemicolon
\setcounter{AlgoLine}{0}
\caption{}
\label{alg:L1_norm}
\textbf{Define}: $\psi(x)= \frac{1}{\eta}\sum_{i=1}^K \ln\frac{1}{x_i} + \frac{K}{\eta}\sum_{i=1}^K x_i\ln x_i $ for some learning rate $\eta$; parameters $\alpha, \beta \in (0,1)$. \\
\textbf{Initialize}: $x_1'$, $x_1$, $w_1$ are uniform distributions, $m_1 = \mathbf{0}$,
and $\calS_0=[K]$. \\ 
\For{$t=1, 2, \ldots, T$}{
\nl	Play $i_t \sim w_t$ and observe $c_t = \ell_{t, i_t}$. \\
\nl    Let $\tau(t) = \max\{\tau \leq t: i_\tau \in \calS_{\tau-1} \}$
       and $\calS_{t} = \{i\in [K]: x_{t,i} < \beta\}$. \\
        \ 
\nl	Construct unbiased estimator $\hat{\ell}_t$ s.t.  $\hat{\ell}_{t,i}= \frac{\ell_{t,i} - m_{t,i}}{w_{t,i}}\one\{i_t=i\} + m_{t,i}$ for all $i$. \\
\nl	Construct optimistic prediction
	$m_{t+1,i}=\begin{cases}
	c_{\tau(t)} \quad&\text{if $i \in \calS_t$,} \\
	\ell_{\rho_i(t), i} \quad&\text{else.}
	\end{cases}
	$ \label{line:optimistic_prediction} \\
\nl	Update $x_{t+1}' = \argmin_{x \in \Delta_K} \inn{x, \hat{\ell}_t} + D_\psi(x, x_t')$. \label{line:OMD1} \\
\nl     Update $x_{t+1} = \argmin_{x \in \Delta_K} \inner{x, m_{t+1}} + D_\psi(x, x_{t+1}')$.  \label{line:OMD2} \\
\nl     Let $\alpha_{t+1} = \frac{\alpha(1-c_{\tau(t)})}{1+\alpha(1-c_{\tau(t)})}$ and $w_{t+1}$ be such that \\
     \[
     w_{t+1, i} = \begin{cases}
     x_{t+1,i}(1-\alpha_{t+1}) \quad&\text{if $i \in \calS_{t} \backslash \{i_{\tau(t)}\}$,}\\
     x_{t+1, i} + \alpha_{t+1}\sum_{j\in \calS_{t} \backslash \{i_{\tau}\}}x_{t+1,j} \quad&\text{else if $i = i_{\tau(t)}$,}\\
     x_{t+1, i} \quad&\text{else.} \\
     \end{cases}
     \] \label{line:bias2}
}    
\end{algorithm}

The next theorem shows the improved regret of \pref{alg:L1_norm}
(see \pref{app:proofs} for the proof).

\begin{theorem}\label{thm:1_norm_upper_bound}
\pref{alg:L1_norm} with $\eta \leq \min\left\{\frac{1}{K}, \frac{1}{162}\right\}$ and $\alpha = 8\eta$ ensures
\[
\reg = \order\left(\frac{K\ln T}{\eta} + \frac{\eta V_1}{K\beta} + \eta\sqrt{\beta TV_1} \right)
\]
for oblivious adversary.
Picking the optimal parameters leads to regret $\otil\left(\LOneBound + K^2\right)$.
\end{theorem}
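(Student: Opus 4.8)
The plan is to decompose the regret as
\[
\reg \;=\; \E\Big[\textstyle\sum_{t=1}^T \inn{w_t-\basis_{i^\star},\ell_t}\Big]
\;=\; \E\Big[\textstyle\sum_{t=1}^T \inn{x_t-\basis_{i^\star},\ell_t}\Big]
\;+\; \E\Big[\textstyle\sum_{t=1}^T \inn{w_t-x_t,\ell_t}\Big],
\]
that is, the mirror-descent regret of the sequence $\{x_t\}$ plus the ``bias'' overhead of sampling from $\{w_t\}$, and to control each piece separately. For the first piece I would invoke a general optimistic-mirror-descent guarantee for the hybrid regularizer $\psi$ (the role of \pref{lem:OMD}, together with the stability statement \pref{lem:stability}). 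The key point is that $\psi$ is a coordinate-wise sum of a log-barrier (coefficient $\tfrac1\eta$) and a scaled negative entropy (coefficient $\tfrac K\eta$), so its Hessian is diagonal and the variance term produced by the analysis splits coordinate by coordinate: one may charge each coordinate either $\order(\eta)\,x_{t,i}^2(\hat{\ell}_{t,i}-m_{t,i})^2$ (the log-barrier bound) or $\order(\eta/K)\,x_{t,i}(\hat{\ell}_{t,i}-m_{t,i})^2$ (the entropy bound). I would use the log-barrier bound on the minority coordinates $\calS_{t-1}$ and the entropy bound on the majority coordinates, which is exactly what this hybrid regularizer is engineered to permit; the regularization overhead of both components is $\order(K\ln T/\eta)$. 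Making local norms at consecutive iterates comparable — so that this ``positive-term'' form of the mirror-descent bound is valid — requires a stability argument, and this is where the constraints $\eta\le\tfrac1{162}$ and $\eta\le\tfrac1K$ enter.

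For the majority group the argument is close to~\citep{wei2018more}. Since $\hat{\ell}_{t,i}-m_{t,i}=0$ unless $i=i_t$, only the played arm contributes; on a majority round no bias is applied, so $w_{t,i_t}=x_{t,i_t}\ge\beta$, and the entropy variance is at most
\[
\order(\eta/K)\sum_{t:\,i_t\notin\calS_{t-1}}\frac{(c_t-\ell_{\rho_{i_t}(t-1),i_t})^2}{x_{t,i_t}}
\;\le\; \order(\eta/(K\beta))\sum_{t=1}^{T}\big|c_t-\ell_{\rho_{i_t}(t-1),i_t}\big|.
\]
The last sum telescopes \emph{per arm}: for a fixed arm $i$ the increments $|\ell_{t,i}-\ell_{\rho_i(t-1),i}|$ over the rounds in which $i$ is played lie in disjoint time intervals, so they add up to at most $\sum_{s}|\ell_{s,i}-\ell_{s-1,i}|$; summing over $i$ gives (in expectation) $V_1$. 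This produces the $\order(\eta V_1/(K\beta))$ term.

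The crux is the minority group, which must yield the $\order(\eta\sqrt{\beta TV_1})$ term. Here the optimistic prediction of a minority arm is $c_{\tau(t)}$, the algorithm's loss on the previous minority round, and on a minority round $\tfrac{x_{t,i_t}}{w_{t,i_t}}=\order(1)$, so the log-barrier variance is $\order(\eta)\sum_{t\in\calT_{\min}}(c_t-c_{\tau(t-1)})^2$, where $\calT_{\min}$ is the set of minority rounds. As in the proof of \pref{thm:max_norm_upper_bound}, I would use the bias towards $i_{\tau(t-1)}$ inside $\calS_{t-1}$ to generate a negative term that, via a telescoping identity of the form $(c_t-c_s)^2-2c_s(c_s-c_t)=c_t^2-c_s^2$, cancels the ``diagonal'' part of this sum; what remains is the variation of the loss of arm $i_{\tau(t-1)}$ between consecutive minority rounds, which telescopes to at most $V_\infty\le V_1$, plus a residual that I would control by Cauchy--Schwarz, pairing the number of minority rounds and their total $x$-mass (each summand $<\beta$, over $T$ rounds) against the loss variation $V_1$, which is what produces the factor $\sqrt{\beta TV_1}$. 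I expect this minority analysis to be the main obstacle: extracting the correct power of $\beta$, and handling the mismatch that $\calS_{t-1}$ is defined through $x_{t-1}$ while the bias acts on $x_t$ (so one must also argue that $i_{\tau(t-1)}$ is re-selected often enough for the bias to pay for the variance it is meant to cancel). Finally, summing the three contributions gives $\reg=\order(K\ln T/\eta+\eta V_1/(K\beta)+\eta\sqrt{\beta TV_1})$; choosing $\beta\asymp V_1^{1/3}/(K^{2/3}T^{1/3})$ makes the last two terms both of order $\eta\,V_1^{2/3}T^{1/3}/K^{1/3}$, and then choosing $\eta$ to balance against $K\ln T/\eta$ subject to $\eta\le\min\{\tfrac1K,\tfrac1{162}\}$ yields $\otil(\LOneBound)$ when this optimal $\eta$ is admissible, and the extra additive $\otil(K^2)$ when the constraint $\eta\le\tfrac1K$ becomes active.
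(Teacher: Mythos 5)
Your overall architecture matches the paper's: the same decomposition into the mirror-descent regret of $\{x_t\}$ plus the bias overhead $\E[\sum_t\inn{w_t-x_t,\ell_t}]$, the same use of the hybrid regularizer to charge the entropy-type variance $\order(\eta/(K x_{t,i_t}))(\ell_{t,i_t}-m_{t,i_t})^2$ to majority rounds and the log-barrier-type variance $\order(\eta)(\ell_{t,i_t}-m_{t,i_t})^2$ to minority rounds, the same per-arm telescoping giving $\order(\eta V_1/(K\beta))$, the same cancellation of the minority variance $\sum_t(c_{\tau(t)}-c_{\tau(t-1)})^2$ by the negative term generated by the bias, and the same parameter optimization at the end. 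The gap is exactly where you flagged it: the leftover term after the cancellation, $\alpha\,\E\big[\sum_{t:\,i_{t+1}\in\calS_t}|\ell_{t+1,i_{\tau(t)}}-\ell_{\tau(t),i_{\tau(t)}}|\big]$, and your plan for it does not work. It is true that this sum telescopes deterministically to $\order(V_\infty)$ (the gaps between consecutive minority rounds are disjoint), but $V_\infty$ can be $\Theta(T)$, so a bound of $\eta V_\infty$ only recovers something like the $\sqrt{KV_\infty}$ rate and not the claimed $\eta\sqrt{\beta TV_1}$; and there is no separate ``residual'' left over to which Cauchy--Schwarz could apply. The whole term must be bounded by $\order(\sqrt{\beta TV_1})$, and Cauchy--Schwarz against the number of minority rounds does not produce that.

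The paper's actual mechanism is an interval-length threshold: fix $L$, observe that there are at most $T/L$ inter-minority gaps of length at least $L$ (each summand is at most $1$), and for gaps shorter than $L$ use the triangle inequality to charge minority round $t'$ the amount $\sum_{s=t'}^{t'+L-1}|\ell_{s+1,i_{t'}}-\ell_{s,i_{t'}}|$; then take the conditional expectation over $i_{t'}\sim w_{t'}$ restricted to $\calS_{t'-1}$, where every arm has probability at most $4\beta$, yielding $4\beta L V_1$ in total, and optimize $L$ to get $\order(\sqrt{\beta TV_1})$. Crucially, that expectation step multiplies $w_{t',i}$ against \emph{future} loss differences $|\ell_{s+1,i}-\ell_{s,i}|$ with $s\ge t'$, which is only legitimate when those losses are fixed in advance — this is the one place obliviousness enters, and your sketch never invokes it, even though \pref{thm:L1_norm_lower_bound} shows the claimed bound is impossible for an adaptive adversary. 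A proof that nowhere uses obliviousness cannot be correct here. The rest of your outline (including the observation that the additive $K^2$ comes from the constraint $\eta\le 1/K$ binding) is sound.
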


The new regret bound is smaller than $\otil(\sqrt{KV_1})$~\citep{wei2018more} as long as $V_1 \geq T/K$.
This serves as a proof of concept that improved regret in terms of $V_1$ is possible for oblivious adversary,
and we expect that even $\otil(\sqrt{V_1})$ is achievable using our techniques,
although we do not have a simple algorithm achieving it yet.

\section{Path-length Bounds for Linear Bandit}
\label{sec:linear_bandit}

In this section we move on to the more general linear bandit problem.
Recall that the decision sets of the learner and the adversary are assumed to be contained in a unit norm ball $\calB$ and the dual norm ball $\calB_*$ respectively.
Our algorithm (see \pref{alg:linear_bandit} for the pseudocode) is based on the optimistic SCRiBLe algorithm~\citep{abernethy2008competing, hazan2011better, rakhlin2013online} with new optimistic predictions.

Specifically, optimistic SCRiBLe is again an instance of the general optimistic mirror descent reviewed in \pref{sec:max_norm}.
The regularizer is any $\nu$-self-concordant barrier of the decision set $\Omega$ for some $\nu > 0$.
Having the point $x_t$, the algorithm uniformly at random selects one of the $2d$ endpoints of the principal axes of the unit Dikin ellipsoid centered at $x_t$,
as the final action $w_t$ (Lines~\ref{line:sampling1},~\ref{line:sampling2} and~\ref{line:sampling3}).
After observing the loss $c_t = \inn{w_t, \ell_t}$,
the algorithm then constructs an unbiased loss estimator (\pref{line:estimator})
and uses it in the next optimistic mirror descent update (\pref{line:OMD_eta_1} and \pref{line:OMD_eta_2}, note that the learning rate $\eta$ is explicitly spelled out here).
We refer the readers to~\citep{abernethy2008competing, rakhlin2013online} for more detailed explanation of the (optimistic) SCRiBLe algorithm.

\begin{algorithm}[t]
\setcounter{AlgoLine}{0}
\caption{}
\label{alg:linear_bandit}
\textbf{Define}: $\psi(x)$ is a $\nu$-self-concordant barrier; learning rate $\eta$. \\
\textbf{Initialize}: $x_1 = x_1' = \argmin_{x\in\Omega}\psi(x)$ and $m_1 = \mathbf{0}$. \\ 
\For{$t=1, 2, \ldots, T$}{
\nl     Compute eigendecomposition $\nabla^2\psi(x_t) = \sum_{i=1}^d \lambda_{t,i}v_{t,i}v_{t,i}^\top$. \label{line:sampling1} \\
\nl		Sample $i_t \in [d]$ and $\sigma_t \in \{-1, +1\}$ uniformly at random. \label{line:sampling2}\\
\nl 	Play $w_t = x_t + \frac{\sigma_t}{\sqrt{\lambda_{t,i_t}}}v_{t,i_t}$ and observe $c_t = \inn{w_t, \ell_t}$.  \label{line:sampling3}\\
\nl 	Construct unbiased estimator $\hat{\ell}_t = d(c_t - \inner{w_t, m_t})\sigma_t\sqrt{\lambda_{t,i_t}}v_{t,i_t} + m_t$. \label{line:estimator}\\
\nl		Set $m_{t+1} 
\begin{cases} 
=\proj_{\calB}\left(m_t - \frac{1}{4}(\inn{w_t, m_t} - c_t)w_t\right) &\text{ (Option I)} \\
=\proj_{\calK_{t+1}}(m_t)$ where $\calK_{t+1} \triangleq \left\{m \in \calB: \inner{w_t, m} = c_t\right\}  &\text{ (Option II)} \\
\text{via the convex body chasing algorithm of~\citep{sellke2019chasing}} &\text{ (Option III)}
\end{cases} 
$\label{line:chasing} \\
\nl 	Update $x_{t+1}'  = \argmin_{x\in\Omega} \eta\inn{x, \hat{\ell}_t} + D_\psi(x, x_t')$.  \label{line:OMD_eta_1} \\
\nl 	Update $x_{t+1} = \argmin_{x\in\Omega} \eta\inn{x, m_{t+1}} + D_\psi(x, x_{t+1}')$.  \label{line:OMD_eta_2}
}    
\end{algorithm}

For any optimistic prediction sequence $m_1, \ldots, m_T$,
~\citet{rakhlin2013online} shows that the regret of optimistic SCRiBLe is bounded as
\begin{equation}\label{eqn:SCRiBLe}
\reg = \order\left(\frac{\nu\ln T}{\eta} + \eta d^2 \E\left[ \sum_{t=1}^T \inn{w_t, \ell_t - m_t}^2 \right]\right).
\end{equation}
It remains to specify how to pick the optimistic predictions $m_1, \ldots, m_T$ such that the last term above $\sum_{t=1}^T \inn{w_t, \ell_t - m_t}^2$ is close to the path-length of the loss sequence $\ell_1, \ldots, \ell_T$.
This is trivial in the full information setting where one observes $\ell_t$ at the end of round $t$ and can simply set $m_t = \ell_{t-1}$.
In the bandit setting, however, only $c_t = \inn{w_t, \ell_t}$ is observed and the problem becomes more challenging.
In the next subsections we propose two approaches,
one through a reduction to obtaining dynamic regret in an online learning problem with full information, 
and another via a further reduction to an instance of convex body chasing.
As a side result, we obtain new dynamic regret bounds that may be of independent interest.

\subsection{Reduction to dynamic regret}\label{sec:dynamic_regret}

\citet{rakhlin2013online} suggest treating the problem of selecting $m_t$ as another online learning problem.
Specifically, consider the following online learning formulation:
at each time $t$ the algorithm selects $m_t \in \calB_*$ and then observes the loss function $f_t(m) = \inn{w_t, \ell_t - m}^2 = (c_t - \inn{w_t, m})^2$.
Note that this is a full information problem even though $\ell_t$ is unknown
and is in fact the standard problem of online linear regression with squared loss.
Further observe that applying Online Newton Step~\citep{hazan2007logarithmic} to learn $m_t$ ensures
\begin{align*}
&\sum_{t=1}^T f_t(m_t)
\leq \min_{m^\star\in \calB_*} \sum_{t=1}^T f_t(m^\star) + \order(d\ln T) 
\leq \min_{m^\star\in \calB_*} \sum_{t=1}^T \norm{\ell_t - m^\star}_*^2 + \order(d\ln T)
\end{align*}
Picking $m^\star= \frac{1}{T}\sum_{s=1}^T\ell_s$ and combining the above with \pref{eqn:SCRiBLe} immediately recover the main result of~\citep{hazan2011better} with a different approach.
(This observation was not made in~\citep{rakhlin2013online} though.)

However, competing with a fixed $m^\star$ is not adequate for getting path-length bound.
Instead in this case we need a {\it dynamic regret} bound~\citep{zinkevich2003online} that allows the algorithm to compete with some sequence $m_1^\star, \ldots, m_T^\star$ instead of a fixed $m^\star$. 
Typical dynamic regret bounds depend on either the variation of the loss functions or the competitor sequence~\citep{jadbabaie2015online, mokhtari2016online, Yang2016tracking, zhang2017improved, zhang2018adaptive},
and here we need the latter one.
Specifically, when $\calB_*$ is the unit 2-norm ball, \citet{Yang2016tracking} discover that projected gradient descent with a constant learning rate ensures for any minimizer sequence $m_1^\star \in \argmin_{m\in\calB_*}f_1(m), \ldots, m_T^\star\in \argmin_{m\in\calB_*}f_T(m)$,
\begin{equation}\label{eqn:dynamic_regret}
\sum_{t=1}^T f_t(m_t) -
\sum_{t=1}^T f_t(m_t^\star) \leq  \order\left(L\sum_{t=2}^T \norm{m_t^\star-m_{t-1}^\star}_2\right)
\end{equation}
as long as the following assumption holds:
\begin{assumption}\label{ass:smooth}
Each $f_t$ is convex and $L$-smooth (that is, for any $m, m' \in \calB_*$, 
$f_t(m) \leq f_t(m') + \inn{\nabla f_t(m'), m-m'} + \frac{L}{2}\norm{m-m'}_2^2$).
Additionally, $\nabla f_t(m^\star) = 0$ for any $m^\star \in \argmin_{m\in\calB_*}f_t(m)$.
\end{assumption}

It is clear that $f_t(m) = (c_t - \inn{w_t, m})^2$ satisfies \pref{ass:smooth} with $L=4$.
Also note that Option I in \pref{line:chasing} is exactly doing projected gradient descent with $f_t$ (we define $\proj_\calK(m) = \argmin_{m' \in \calK}\norm{m - m'}$).
Therefore picking $m_t^\star = \ell_t$ and combining \pref{eqn:dynamic_regret} and \pref{eqn:SCRiBLe} immediately imply the following.

\begin{corollary}\label{cor:linear_bandit}
When $\calB$ and $\calB_*$ are unit 2-norm balls, \pref{alg:linear_bandit} with Option I ensures
$
\reg = \order\left(\frac{\nu\ln T}{\eta} + \eta d^2 \E\left[\sum_{t=1}^T \norm{\ell_{t} - \ell_{t-1}}_2\right]\right),
$
which is of order $\otil\left( d\sqrt{\nu V_2} \right)$ with the optimal $\eta$.
\end{corollary}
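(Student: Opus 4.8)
The plan is to combine the dynamic-regret guarantee \pref{eqn:dynamic_regret} with the SCRiBLe regret bound \pref{eqn:SCRiBLe}, once I have checked that Option I generates exactly the projected-gradient iterates to which \pref{eqn:dynamic_regret} applies.

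First I would verify that the loss sequence $f_t(m) = (c_t - \inn{w_t, m})^2$ obeys \pref{ass:smooth}. Convexity is immediate (square of an affine function), and $\nabla^2 f_t(m) = 2\, w_t w_t^\top \preceq 2\eye{d}$ because $w_t$, being an endpoint of the unit Dikin ellipsoid of a self-concordant barrier, lies in $\Omega \subseteq \calB$ so $\norm{w_t}_2 \le 1$; hence $f_t$ is $L$-smooth with $L \le 4$. For the minimizer condition I would use that the adversary's parameter satisfies $\ell_t \in \calL \subseteq \calB_*$ and $f_t(\ell_t) = (c_t - \inn{w_t,\ell_t})^2 = 0$, so $\min_{m \in \calB_*} f_t(m) = 0$ and every minimizer $m^\star$ has $\inn{w_t, m^\star} = c_t$, whence $\nabla f_t(m^\star) = -2(c_t - \inn{w_t,m^\star})\,w_t = \mathbf{0}$.

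Next I would observe that, since here $\calB = \calB_*$ is the unit $2$-norm ball, Option I in \pref{line:chasing} reads $m_{t+1} = \proj_{\calB_*}\!\big(m_t - \frac14(\inn{w_t,m_t} - c_t)\,w_t\big) = \proj_{\calB_*}\!\big(m_t - \frac18 \nabla f_t(m_t)\big)$ because $\nabla f_t(m_t) = 2(\inn{w_t,m_t} - c_t)\,w_t$; that is, it is projected gradient descent on $\{f_t\}$ over $\calB_*$ with a constant step size, and it keeps $m_t \in \calB_* = \calB$ as required by \pref{eqn:SCRiBLe}. Invoking \pref{eqn:dynamic_regret} along the realized trajectory with the competitor sequence $m_t^\star = \ell_t$ (a minimizer by the previous paragraph) and using $f_t(\ell_t) = 0$ gives, pathwise, $\sum_{t=1}^T \inn{w_t, \ell_t - m_t}^2 = \sum_{t=1}^T f_t(m_t) \le \order\!\big(\sum_{t=2}^T \norm{\ell_t - \ell_{t-1}}_2\big)$. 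Taking expectations yields $\E[\sum_{t=1}^T \inn{w_t, \ell_t - m_t}^2] \le \order(V_2)$, and plugging this into \pref{eqn:SCRiBLe} gives $\reg = \order(\nu \ln T / \eta + \eta d^2 V_2)$; optimizing over $\eta$ (with $\eta = \Theta(\sqrt{\nu \ln T/(d^2 V_2)})$) produces $\reg = \order(d\sqrt{\nu V_2 \ln T}) = \otil(d\sqrt{\nu V_2})$.

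The only point requiring a little care --- the ``main obstacle'', such as it is --- is that \pref{eqn:dynamic_regret} (borrowed from \citet{Yang2016tracking}) is a deterministic inequality valid for any sequence satisfying \pref{ass:smooth}, so it must be applied conditionally on the realization of the algorithm's and the adaptive adversary's randomness $(i_s, \sigma_s, \ell_s)_s$ and only then averaged. This causes no difficulty because the verification of \pref{ass:smooth} above uses only $\norm{w_t}_2 \le 1$ and $\ell_t \in \calB_*$, which hold on every sample path. The remaining bookkeeping --- the stray $t = 1$ term, the precise value of $L$, and rounding $\eta$ to respect any constraint implicit in \pref{eqn:SCRiBLe} --- is routine.
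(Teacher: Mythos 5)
Your proposal is correct and follows essentially the same route as the paper: verify \pref{ass:smooth} for $f_t(m)=(c_t-\inn{w_t,m})^2$, recognize Option I as projected gradient descent on $f_t$ over $\calB_*=\calB$, apply \pref{eqn:dynamic_regret} with $m_t^\star=\ell_t$ (so $f_t(m_t^\star)=0$), and plug the result into \pref{eqn:SCRiBLe}. The extra care you take (explicit step size $1/8$, pathwise application before taking expectations) only fleshes out details the paper leaves implicit.
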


To deal with the case when $\calB$ and $\calB_*$ are arbitrary primal-dual norm balls, we require dynamic regret bounds that are similar to \pref{eqn:dynamic_regret} but hold for an arbitrary norm.
We are not aware of any such existing results. Instead, in the next section we provide a solution via a further reduction to convex body chasing.

\subsection{Further reduction to convex body chasing}\label{sec:chasing}

Next we provide an alternative approach to obtain dynamic regret similar to \pref{eqn:dynamic_regret}, via a reduction to convex body chasing~\citep{friedman1993convex}, which in turn leads to a different approach for obtaining path-length bound for linear bandit.

We first describe the general convex body chasing problem
(overloading some of our notations for convenience).
For each time $t=1, \ldots, T$,
the algorithm is presented with a convex set $\calK_t$ in some metric space and then needs to select a point $m_t \in \calK_t$.
The algorithm performance is measured by the total movement cost 
$\sum_{t=1}^{T-1} \dist(m_t, m_{t+1})$ where $\dist(\cdot, \cdot)$ is some distance function (usually a metric).
The algorithm is said to be $\omega$-competitive if its total movement cost is at most $\omega\sum_{t=1}^{T-1} \dist(m_t^\star, m_{t+1}^\star)$ for any sequence $m_1^\star \in \calK_1, \ldots, m_T^\star \in \calK_T$.

Now for a sequence of convex functions $f_1, \ldots, f_T$ defined on $\calB_*$ that are $G$-Lipschitz with respect to norm $\norm{\cdot}_*$,
suppose we have a $\omega$-competitive algorithm for $\calK_t = \argmin_{m\in\calB_*} f_{t-1}(m)$ ($\calK_1=\calB_*$)  and $\dist(m, m') = \norm{m-m'}_*$, to produce a sequence $m_1, \ldots, m_T$, then it holds for any minimizer sequence $m_1^\star \in \calK_2 , \ldots, m_T^\star\in \calK_{T+1} $, 
\begin{align*}
\sum_{t=1}^{T-1} f_t(m_t) -   f_t(m_t^\star) 
= \sum_{t=1}^{T-1} f_t(m_t) -   f_t(m_{t+1})  
\leq G \sum_{t=1}^{T-1} \norm{m_t - m_{t+1}}_* 
\leq G\omega \sum_{t=1}^{T-1} \norm{m_t^\star-m_{t+1}^\star}_*, 
\end{align*}
where the first step is by the fact $m_t^\star, m_{t+1} \in \calK_{t+1}$,
the second step is due to $G$-Lipschitzness, and the last step is by $\omega$-competitiveness.
This is exactly a dynamic regret bound of the form \pref{eqn:dynamic_regret},
thus showing a reduction from dynamic regret to convex body chasing,
under {\it only the Lipschitzness assumption} on $f_t$.
%
%
%
Furthermore, recent work of~\citep{sellke2019chasing} shows that for any norm $\norm{\cdot}_*$, there exists an algorithm with competitive ratio $\omega=d$.
This immediately implies the following new dynamic regret result.
\begin{proposition}
For an online convex optimization problem with convex loss functions
$f_1, \ldots, f_T$ that are defined over a subset of $\calB_*$ and are $G$-Lipschitz with respect to some norm $\norm{\cdot}_*$,
there exists an online learning algorithm that selects $m_1, \ldots, m_T$ such that
\begin{equation*}
\sum_{t=1}^T f_t(m_t) -
\sum_{t=1}^T f_t(m_t^\star) \leq  \order\left(Gd \sum_{t=1}^{T-1} \norm{m_t^\star-m_{t+1}^\star}_*\right)
\end{equation*}
for any minimizer sequence $m_1^\star \in \argmin_{m}f_1(m), \ldots, m_T^\star\in \argmin_{m}f_T(m)$.\footnote{%
According to~\citep{sellke2019chasing}, when $\calB_*$ is the 2-norm ball, the dependence on $d$ can be improved to $\sqrt{d\ln T}$.
}
\end{proposition}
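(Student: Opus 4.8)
The plan is to read this proposition straight off the reduction already spelled out above, combined with the convex body chasing algorithm of~\citep{sellke2019chasing}. Assume for simplicity that the $f_t$ share a common compact convex domain $\calD \subseteq \calB_*$ (the case in our application, where $f_t(m)=(c_t-\inn{w_t,m})^2$ is continuous on $\calD=\calB_*$; the general case is analogous). First I would set up a convex body chasing instance running for $T+1$ rounds: let $\calK_1 = \calD$ and $\calK_{t+1} = \argmin_{m\in\calD} f_t(m)$ for $t=1,\ldots,T$, with $\dist(m,m') = \norm{m-m'}_*$. Each $\calK_t$ is nonempty by compactness of $\calD$, closed, and convex since $f_{t-1}$ is convex, so the instance is well-posed; moreover $m_t^\star \in \argmin f_t = \calK_{t+1}$ for the minimizers in the statement.

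Next I would invoke~\citep{sellke2019chasing} to obtain a $d$-competitive chasing algorithm for the norm $\norm{\cdot}_*$, which produces $m_1\in\calK_1,\ldots,m_{T+1}\in\calK_{T+1}$ satisfying $\sum_{t=1}^T \norm{m_t-m_{t+1}}_* \le d\sum_{t=1}^T \norm{u_t-u_{t+1}}_*$ for every comparator $u_1\in\calK_1,\ldots,u_{T+1}\in\calK_{T+1}$. The online learner simply plays $m_1,\ldots,m_T$. Since $m_{t+1}\in\calK_{t+1}=\argmin f_t$, we have $f_t(m_{t+1}) = f_t(m_t^\star) = \min_{m\in\calD}f_t(m)$, so by $G$-Lipschitzness of $f_t$,
\begin{align*}
\sum_{t=1}^T \big(f_t(m_t) - f_t(m_t^\star)\big)
&= \sum_{t=1}^T \big(f_t(m_t) - f_t(m_{t+1})\big) \\
&\le G\sum_{t=1}^T \norm{m_t-m_{t+1}}_*
\;\le\; Gd\sum_{t=1}^T \norm{u_t-u_{t+1}}_*.
\end{align*}
Finally I would specialize the comparator by setting $u_{t+1}=m_t^\star$ for $t=1,\ldots,T$ (legal since $m_t^\star\in\calK_{t+1}$) and $u_1 = m_1^\star \in \calD = \calK_1$, so that $\norm{u_1-u_2}_*=0$ and $\sum_{t=1}^T\norm{u_t-u_{t+1}}_* = \sum_{t=1}^{T-1}\norm{m_t^\star-m_{t+1}^\star}_*$, which yields exactly the claimed bound $\order\!\left(Gd\sum_{t=1}^{T-1}\norm{m_t^\star-m_{t+1}^\star}_*\right)$. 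For the footnoted refinement, when $\calB_*$ is the Euclidean ball one replaces the competitive ratio $d$ by the $\order(\sqrt{d\ln T})$ guarantee for convex body chasing in Euclidean space (also from~\citep{sellke2019chasing}).

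There is no deep obstacle here: the statement is essentially a corollary of the reduction and the chasing result. The only points needing care are the off-by-one bookkeeping — running chasing for $T+1$ rather than $T$ rounds and using the freedom in $u_1$ to annihilate the first movement term — and verifying that the minimizer bodies $\calK_t$ are genuine nonempty compact convex sets, which is what makes both the chasing instance well-defined and the identity $f_t(m_{t+1}) = \min_{m\in\calD}f_t$ valid; both are immediate once continuity of the $f_t$ on a common compact convex domain is granted.
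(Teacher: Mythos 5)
Your proposal is correct and follows essentially the same route as the paper: reduce to convex body chasing on the minimizer bodies $\calK_{t+1}=\argmin_m f_t(m)$, invoke the $d$-competitive algorithm of~\citep{sellke2019chasing}, and convert movement cost to regret via Lipschitzness. Your version is in fact slightly more careful than the paper's about the off-by-one indexing (running the chasing instance for $T+1$ rounds and choosing $u_1=m_1^\star$ to kill the first movement term), but there is no substantive difference in the argument.
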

We note that this is the first dynamic regret bound in terms of the variation $\sum_{t=1}^{T-1} \norm{m_t^\star-m_{t+1}^\star}_*$ for an arbitrary norm, without any explicit dependence on $T$, and under only the Lipschitzness assumption.
Combining this result with \pref{eqn:SCRiBLe} and picking $m_t^\star = \ell_t$ immediately imply the following path-length regret bound.
\begin{corollary}\label{cor:linear_bandit3}
\pref{alg:linear_bandit} with Option III 
ensures
$
\reg = \order\left(\frac{\nu\ln T}{\eta} + \eta d^3 \E\left[\sum_{t=1}^T \norm{\ell_{t} - \ell_{t-1}}_*\right]\right),
$
which is of order $\otil\left( d^{3/2}\sqrt{\nu V_*} \right)$ with the optimal $\eta$.
\end{corollary}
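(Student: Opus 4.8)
The plan is to run the dynamic-regret-via-chasing reduction of \pref{sec:chasing} on the squared prediction errors and substitute the outcome into \pref{eqn:SCRiBLe}. Concretely, take $f_t(m) = \inner{w_t, \ell_t - m}^2 = (c_t - \inner{w_t, m})^2$, which is exactly the per-round term appearing in \pref{eqn:SCRiBLe}. I would first record three facts about $f_t$ viewed as a function on $\calB_*$: (i) it is convex, being the square of an affine function; (ii) it is $4$-Lipschitz with respect to $\norm{\cdot}_*$, since $w_t = x_t + \sigma_t\lambda_{t,i_t}^{-1/2}v_{t,i_t}$ is an endpoint of a principal axis of the unit Dikin ellipsoid at $x_t$ and hence lies in $\Omega\subseteq\calB$, so $\norm{w_t}\le 1$ and $|c_t - \inner{w_t,m}|\le|\inner{w_t,\ell_t}| + \norm{w_t}\norm{m}_* \le 2$, giving $\norm{\nabla f_t(m)} = 2|c_t - \inner{w_t,m}|\,\norm{w_t}\le 4$; and (iii) $f_t$ is fully determined by the observed $c_t$ and the played $w_t$, so the convex body $\calK_{t+1} = \argmin_{m\in\calB_*}f_t(m) = \{m\in\calB_*: \inner{w_t,m}=c_t\}$ chased under Option~III is revealed at the end of round $t$, in time to produce $m_{t+1}$ for the next round, exactly as \pref{alg:linear_bandit} prescribes.

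Next I would invoke the Proposition above --- the \pref{sec:chasing} reduction instantiated with the $d$-competitive chasing algorithm of~\citep{sellke2019chasing} --- with the competitor sequence $m_t^\star = \ell_t$; this is a legitimate minimizer sequence because $f_t(\ell_t) = (c_t - \inner{w_t,\ell_t})^2 = 0 = \min_m f_t(m)$. The competitive guarantee holds pathwise, so I can take expectations afterwards to obtain
\[
\E\!\left[\sum_{t=1}^T \inner{w_t,\ell_t-m_t}^2\right] = \E\!\left[\sum_{t=1}^T f_t(m_t) - \sum_{t=1}^T f_t(\ell_t)\right] \le \order\!\left(d\,\E\!\left[\sum_{t=1}^{T-1}\norm{\ell_{t+1}-\ell_t}_*\right]\right) \le \order(d\,V_*),
\]
the last inequality using $\sum_{t=1}^{T-1}\norm{\ell_{t+1}-\ell_t}_* \le \sum_{t=1}^{T}\norm{\ell_t-\ell_{t-1}}_*$ and the definition $V_* = \E[\sum_{t=1}^T\norm{\ell_t-\ell_{t-1}}_*]$.

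Substituting this bound into \pref{eqn:SCRiBLe} gives $\reg = \order(\nu\ln T/\eta + \eta d^3 V_*)$, and choosing $\eta = \sqrt{\nu\ln T/(d^3 V_*)}$ to balance the two terms yields $\reg = \otil(d^{3/2}\sqrt{\nu V_*})$, as claimed. I do not expect a substantive obstacle, since once the Proposition is in hand the argument is essentially bookkeeping. The only points that warrant care are the two flagged in the first paragraph: confirming that each chased body $\calK_{t+1}$ depends solely on information revealed by the end of round $t$ (so the reduction is genuinely causal and fits inside \pref{alg:linear_bandit}), and checking that invoking a worst-case competitive bound against the random, possibly adversarially correlated sequence $\ell_1,\dots,\ell_T$ before taking expectations is legitimate. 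Both hold, but these are the steps that actually use structure rather than algebra.
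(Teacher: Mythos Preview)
Your proposal is correct and follows exactly the route the paper sketches: apply the Proposition (dynamic regret via the $d$-competitive chasing algorithm of~\citep{sellke2019chasing}) to the functions $f_t(m)=(c_t-\inner{w_t,m})^2$ with competitor sequence $m_t^\star=\ell_t$, then plug the resulting bound into \pref{eqn:SCRiBLe}. The paper states this as an ``immediate'' consequence and omits the details you spell out (Lipschitz constant, causality of $\calK_{t+1}$, pathwise validity before expectation), so your write-up is a faithful expansion of the intended argument.
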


While the result above holds for an arbitrary norm, it is $\sqrt{d}$ worse than that of \pref{cor:linear_bandit} when the norm is 2-norm.
Below we make another observation that when $\calB_*$ is the 2-norm ball and when \pref{ass:smooth} holds,
the problem in fact reduces to a slightly different convex body chasing problem that admits a constant competitive ratio in some sense.
In particular, note that if $m_t^\star, m_{t+1} \in \calK_{t+1}$ for all $t$,
then by smoothness and $\nabla f_t(m_{t+1})=0$ it holds
\begin{align}
&\sum_{t=1}^{T-1} f_t(m_t) -   f_t(m_t^\star) 
= \sum_{t=1}^{T-1} f_t(m_t) -   f_t(m_{t+1}) \notag \\
&\leq \sum_{t=1}^{T-1} \inn{\nabla f_t(m_{t+1}), m_t - m_{t+1}} + \frac{L}{2}\norm{m_t - m_{t+1}}_2^2 
= \frac{L}{2}\sum_{t=1}^{T-1} \norm{m_t - m_{t+1}}_2^2. \label{eqn:intermediate}
\end{align}
Therefore, if we had a chasing algorithm with $\calK_t$'s as the sets and {\it squared} 2-norm as the distance function,
we would have a dynamic regret in terms of $\sum_{t=1}^{T-1} \norm{m_t^\star - m_{t+1}^\star}_2^2$.
It turns out that this is not possible in general.
However, we propose a very natural greedy approach that is ``competitive'' in a slightly weaker sense where we measure the movement cost of the algorithm by squared 2-norm and the movement cost of the benchmark by 2-norm.
More concretely we prove the following (see \pref{app:chasing} for the proof):

\begin{theorem}[Convex body chasing with squared 2-norm]\label{thm:chasing}
Suppose $\calB_*$ is the unit 2-norm ball. Let $\calK_1, \ldots, \calK_T \subset \calB_*$ be a sequence of convex sets and $m_{t} = \proj_{\calK_{t}}(m_{t-1})$ (with $m_{0} \in \calB_*$ being arbitrary).
Then the following competitive guarantee holds for any sequence $m_1^\star \in \calK_1, \ldots, m_T^\star \in \calK_T$:
$
\sum_{t=1}^{T-1} \norm{m_t - m_{t+1}}_2^2
\leq 4 + 6\sum_{t=1}^{T-1} \norm{m_t^\star - m_{t+1}^\star}_2.
$
\end{theorem}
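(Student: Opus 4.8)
The plan is a standard potential argument driven by the non-expansiveness of Euclidean projection, with potential $a_t \defeq \norm{m_t - m_t^\star}_2$. The key inequality comes from the obtuse-angle property of projection onto a convex set: since $m_{t+1} = \proj_{\calK_{t+1}}(m_t)$ and $m_{t+1}^\star \in \calK_{t+1}$, one has $\inn{m_t - m_{t+1}, m_{t+1} - m_{t+1}^\star} \geq 0$, and expanding $\norm{m_t - m_{t+1}^\star}_2^2$ around $m_{t+1}$ gives the Pythagorean-type bound
\[
\norm{m_t - m_{t+1}}_2^2 \;\leq\; \norm{m_t - m_{t+1}^\star}_2^2 - \norm{m_{t+1} - m_{t+1}^\star}_2^2 .
\]

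Next I would apply the triangle inequality $\norm{m_t - m_{t+1}^\star}_2 \leq a_t + \delta_t$ where $\delta_t \defeq \norm{m_t^\star - m_{t+1}^\star}_2$, square it, and note $\norm{m_{t+1} - m_{t+1}^\star}_2 = a_{t+1}$, so the per-round estimate becomes $\norm{m_t - m_{t+1}}_2^2 \leq a_t^2 - a_{t+1}^2 + 2a_t\delta_t + \delta_t^2$. Summing over $t = 1, \dots, T-1$ telescopes the first two terms to $a_1^2 - a_T^2 \leq a_1^2$. Finally, because every $m_t$ and every $m_t^\star$ lies in the unit $2$-norm ball $\calB_*$, we have $a_t \leq 2$ and $\delta_t \leq 2$ for all $t$; hence $a_1^2 \leq 4$ and $2a_t\delta_t + \delta_t^2 \leq 4\delta_t + 2\delta_t = 6\delta_t$, which yields $\sum_{t=1}^{T-1}\norm{m_t - m_{t+1}}_2^2 \leq 4 + 6\sum_{t=1}^{T-1}\delta_t$ as claimed.

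I do not expect a genuine obstacle here. The only conceptual point worth flagging is that the projection/Pythagoras step naturally produces the squared quantity $\sum_t\norm{m_t - m_{t+1}}_2^2$ on the left but, after telescoping, only leaves cross terms $a_t\delta_t$ and the square $\delta_t^2$; there is no way to absorb $\delta_t^2$ back into a squared-norm benchmark term, which is exactly why the statement is phrased in the weaker ``squared-on-the-algorithm-side, plain-on-the-benchmark-side'' form and why boundedness of $\calB_*$ is essential (this is how $\delta_t^2$ gets downgraded to $\delta_t$). I would also double-check the boundary indices $t=1$ and $t=T$ and note that the argument needs no regularity of the sets $\calK_t$ beyond convexity, since the projection inequality holds verbatim even when the iterate is forced to make a large jump.
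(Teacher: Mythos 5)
Your proposal is correct and follows essentially the same argument as the paper: the obtuse-angle/Pythagorean inequality for projection, the triangle inequality with the benchmark sequence, bounding the cross and squared terms by $6\delta_t$ via boundedness of $\calB_*$, and telescoping the potential $\norm{m_t - m_t^\star}_2^2$. No gaps.
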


Combining \pref{eqn:intermediate} and \pref{thm:chasing} then recovers the dynamic regret bound of \pref{eqn:dynamic_regret} with a different approach compared to~\citep{Yang2016tracking} (under the same \pref{ass:smooth}).
Note that Option II of \pref{line:chasing} exactly implements the greedy projection approach of \pref{thm:chasing}.
Therefore according to the discussions in \pref{sec:dynamic_regret} we have:

\begin{corollary}\label{cor:linear_bandit2}
When $\calB$ and $\calB_*$ are unit 2-norm balls, \pref{alg:linear_bandit} with Option II 
ensures
$
\reg = \order\left(\frac{\nu\ln T}{\eta} + \eta d^2 \E\left[\sum_{t=1}^T \norm{\ell_{t} - \ell_{t-1}}_2\right]\right),
$
which is of order $\otil\left( d\sqrt{\nu V_2} \right)$ with the optimal $\eta$.
\end{corollary}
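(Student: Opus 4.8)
The plan is to instantiate the reduction chain of Sections~\ref{sec:dynamic_regret} and~\ref{sec:chasing}, using the greedy projection of \pref{thm:chasing} as the ``dynamic regret'' subroutine, and then to feed the resulting bound on $\sum_{t}\inn{w_t,\ell_t-m_t}^2$ into the optimistic SCRiBLe bound \pref{eqn:SCRiBLe}. Throughout, $\calB=\calB_*$ is the unit $2$-norm ball, so the action $w_t$ produced in \pref{line:sampling3} lies in $\Omega\subseteq\calB$ (the unit Dikin ellipsoid is contained in $\Omega$), and $\ell_t,m_t\in\calB_*$; hence $\inn{w_t,\ell_t-m_t}^2\le\norm{w_t}_2^2\norm{\ell_t-m_t}_2^2=O(1)$ deterministically.

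First I would set up the per-round objects. Let $f_t(m)=(c_t-\inn{w_t,m})^2$ and $\calK_{t+1}=\{m\in\calB:\inn{w_t,m}=c_t\}$ as in Option~II. Since $c_t=\inn{w_t,\ell_t}$ with $\ell_t\in\calB_*=\calB$, we have $\ell_t\in\calK_{t+1}$, so $\calK_{t+1}$ is exactly the (nonempty) minimizer set of $f_t$ over $\calB_*$, on which $f_t$ vanishes; moreover $\nabla f_t(m)=-2(c_t-\inn{w_t,m})w_t$ vanishes on $\calK_{t+1}$, so \pref{ass:smooth} holds with $L=4$ (as already noted before \pref{cor:linear_bandit}). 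Option~II sets $m_{t+1}=\proj_{\calK_{t+1}}(m_t)$, hence $m_{t+1}\in\calK_{t+1}$, so in particular $f_t(m_{t+1})=0$ and $\nabla f_t(m_{t+1})=0$.

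Next I would bound $\sum_{t=1}^{T-1}\big(f_t(m_t)-f_t(m_t^\star)\big)$ with the competitor choice $m_t^\star=\ell_t$. Because $m_t^\star=\ell_t\in\calK_{t+1}$ and $m_{t+1}\in\calK_{t+1}$, the telescoping-plus-smoothness identity \pref{eqn:intermediate} applies and gives $\sum_{t=1}^{T-1}f_t(m_t)-f_t(\ell_t)\le\frac{L}{2}\sum_{t=1}^{T-1}\norm{m_t-m_{t+1}}_2^2=2\sum_{t=1}^{T-1}\norm{m_t-m_{t+1}}_2^2$. For the right-hand side I would invoke \pref{thm:chasing}: the tail sequence $(m_t)_{t\ge2}$ is exactly the greedy projection $m_{t+1}=\proj_{\calK_{t+1}}(m_t)$ started from the arbitrary initial point $m_1=\mathbf{0}\in\calB_*$, and the sets $\calK_{t+1}\subset\calB=\calB_*$ satisfy the hypotheses; after a harmless re-indexing (with $m_1$ as the initial point and $\calK_{t+1}$ as $\calK_t$) and with the legal competitor $\ell_t\in\calK_{t+1}$, \pref{thm:chasing} yields, pathwise, $\sum_{t}\norm{m_t-m_{t+1}}_2^2\le4+6\sum_{t}\norm{\ell_t-\ell_{t+1}}_2$. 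Taking expectations gives $\le4+6V_2$. Combining with $f_t(\ell_t)=0$ and the deterministic $O(1)$ bound on the omitted last term $f_T(m_T)$, we obtain $\E\big[\sum_{t=1}^T\inn{w_t,\ell_t-m_t}^2\big]=O(V_2+1)$.

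Finally I would substitute this into \pref{eqn:SCRiBLe}, yielding $\reg=O\big(\frac{\nu\ln T}{\eta}+\eta d^2(V_2+1)\big)$; the additive $O(\eta d^2)$ is lower order (it contributes $\otil(d\sqrt{\nu})$ at the optimal learning rate), so it is absorbed into the claimed bound, and tuning $\eta=\Theta\big(\sqrt{\nu\ln T/(d^2(V_2+1))}\big)$ gives $\reg=\otil(d\sqrt{\nu V_2})$. I do not expect a genuine obstacle, since every ingredient is already established; the only points requiring care are bookkeeping ones flagged above --- verifying that $m_{t+1}$ is a minimizer of $f_t$ (so that both \pref{eqn:intermediate} and $f_t(m_{t+1})=0$ apply), that $\ell_t\in\calK_{t+1}$ (so $\ell_t$ is a valid competitor in \pref{eqn:intermediate} and in \pref{thm:chasing}), that $w_t\in\calB$ so the last-term bound and the inner products are controlled, and the index shift matching the algorithm's initialization to the statement of \pref{thm:chasing}.
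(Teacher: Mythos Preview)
Your proposal is correct and follows exactly the paper's intended argument: the paper's proof is the one-line observation that Option~II implements the greedy projection of \pref{thm:chasing}, so combining \pref{eqn:intermediate} with \pref{thm:chasing} recovers the dynamic regret bound \pref{eqn:dynamic_regret}, and then the discussion around \pref{eqn:SCRiBLe} (with $m_t^\star=\ell_t$) gives the corollary. You have carried out precisely this chain, with the bookkeeping (verifying $\ell_t\in\calK_{t+1}$, $m_{t+1}\in\calK_{t+1}$, the index shift, and the $O(1)$ boundary terms) done more carefully than the paper itself.
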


It is well known that any convex body in $d$ dimension admits an $\order(d)$-self-concordant barrier,
and therefore \pref{alg:linear_bandit} admits a regret bound $\otil\left( d^{3/2}\sqrt{V_2} \right)$ or more generally $\otil\left(d^2\sqrt{V_*}\right)$.



\acks{The authors would like to thank all the anonymous reviewers for their valuable comments. HL and CYW are supported by NSF Grant \#1755781. }

\bibliography{ref}

\appendix

\section{Open Problems}
In this work we provide several improvements on path-length bounds for bandit.
There are several open problems in this direction and we hope that the techniques we develop here are useful for solving these problems.
First, our upper bound for MAB with oblivious adversary has some dependence on $T$.
Whether one can remove this dependence,
and in particular, whether $\order(\sqrt{V_1})$ is achievable are clear open problems.

Second, all existing path-length bounds for bandit are ``first-order'',
while smaller bounds in terms of ``second-order'' path-length $\E\left[\sum_{t=1}^T \norm{\ell_t - \ell_{t-1}}_p^2\right]$ for some $p \geq 1$
are only known for full information problems.
It is therefore natural to ask whether second-order path-length bounds are achievable for bandits, or there is a distinction here due to the partial information feedback.

Third, in light of the bound $\sqrt{d\sum_{t=1}^T \inner{w^\star, \ell_t}^2}$ of~\citep{cutkosky2018black} for full information setting with linear losses (where $w^\star$ is the competitor the regret is with respect to),
it is also very natural to ask if path-length bounds of the form $\text{poly}(d)\sqrt{\sum_{t=1}^T |\inner{w^\star, \ell_t - \ell_{t-1}}|}$ are possible (for either full-information or bandit feedback).

\section{Proof of \pref{thm:1_norm_upper_bound}}
\label{app:proofs}

We outline the proof below and defer several technical lemmas to the next subsection. \\

\begin{proof}[of \pref{thm:1_norm_upper_bound}]
Define $B_t = \one\{i_t \in \calS_{t-1}\}$.
Standard optimistic mirror descent analysis (see \pref{lem:OMD}) shows that the hybrid regularizer ensures the following regret bound for the $x_t$ sequence: for any arm $i$,
\begin{equation}\label{eqn:OMD_bound}
\E\left[\sum_{t=1}^T \inn{x_t-\basis_i, {\ell}_t}\right]
\leq \mathcal{O}\left(\frac{K\ln T}{\eta}\right) + 
4\eta\E\left[\sum_{t:B_t=0} \frac{(\ell_{t,i_t}-m_{t,i_t})^2}{Kx_{t,i_t}} 
+ \sum_{t: B_t=1} (\ell_{t,i_t}-m_{t,i_t})^2 \right].
\end{equation}
Since $x_{t, i_t} \geq \frac{x_{t-1, i_t}}{2}$ (see \pref{lem:stability}) and $x_{t-1, i_t} \geq \beta$ when $B_t=0$,
we have by the definition of $m_t$
\begin{align}
&\E\left[\sum_{t:B_t=0} \frac{(\ell_{t,i_t}-m_{t,i_t})^2}{Kx_{t,i_t}} \right]
\leq \E\left[\frac{2}{K\beta} \sum_{t:B_t=0} (\ell_{t,i_t}-m_{t,i_t})^2 \right]
= \E\left[\frac{2}{K\beta} \sum_{t:B_t=0} (\ell_{t,i_t}-\ell_{\rho_{i_t}(t-1),i_t})^2 \right] \notag \\ 
&\leq \E\left[\frac{2}{K\beta} \sum_{t=1}^T |\ell_{t,i_t}-\ell_{\rho_{i_t}(t-1),i_t}| \right] 
= \E\left[\frac{2}{K\beta}\sum_{i=1}^K \sum_{t: i_t=i} |\ell_{t, i}-\ell_{\rho_{i}(t-1),i}| \right]
\leq \frac{2V_1}{K\beta}. \label{eqn:entropy_stability}
\end{align}
On the other hand we have by the definition of $m_t$, $c_t$ and $\tau(t)$,
\begin{equation}\label{eqn:log_barrier_stability}
\E\left[\sum_{t: B_t=1} (\ell_{t,i_t}-m_{t,i_t})^2\right]
= \E\left[\sum_{t: B_t=1} \left(c_t - c_{\tau(t-1)}\right)^2\right]
= \E\left[\sum_{t=2}^T \left(c_{\tau(t)} - c_{\tau(t-1)}\right)^2\right].
\end{equation}
Next in \pref{lem:bias} we show an analogue of \pref{eqn:bias} which bounds 
$\E\left[\sum_{t=1}^T \inn{w_t-x_t, {\ell}_t}\right]$, the difference between playing $x_t$ and $w_t$, by
\begin{equation}\label{eqn:bias2}
\order(1) + \alpha \E\left[\sum_{t: B_{t+1}=1} |\ell_{t+1,i_{\tau(t)}}-\ell_{\tau(t),i_{\tau(t)}}| \right]
- \frac{\alpha}{2}\E\left[\sum_{t=2}^T(c_{\tau(t)}-c_{\tau(t-1)})^2 \right].
\end{equation}
It remains to bound the second term above.     
Note that for any integer $L$ between $1$ and $T$,
we have
\begin{align*}
&\E\left[\sum_{t: B_{t+1}=1} |\ell_{t+1,i_{\tau(t)}}-\ell_{\tau(t),i_{\tau(t)}}| \right] \\
&=  \E\left[\sum_{t: B_{t+1}=1, t+1-\tau(t) \geq L} |\ell_{t+1,i_{\tau(t)}}-\ell_{\tau(t),i_{\tau(t)}}| \right] +
\E\left[\sum_{t: B_{t+1}=1, t+1-\tau(t) < L} |\ell_{t+1,i_{\tau(t)}}-\ell_{\tau(t),i_{\tau(t)}}| \right] \\
&\leq \frac{T}{L} + \E\left[\sum_{t: B_{t}=1}\sum_{s=t}^{\min\{t+L-1, T\}} |\ell_{s+1,i_{t}}-\ell_{s,i_{t}}| \right] \\
&\leq \frac{T}{L} + \E\left[\sum_{t}\sum_{s=t}^{\min\{t+L-1, T\}}\sum_{i \in \calS_{t-1}} w_{t,i}|\ell_{s+1,i}-\ell_{s,i}| \right] \\
&\leq \frac{T}{L} + 4\beta\E\left[\sum_{t}\sum_{s=t}^{\min\{t+L-1, T\}}\sum_{i \in \calS_{t-1}} |\ell_{s+1,i}-\ell_{s,i}| \right]  \\
&\leq \frac{T}{L} + 4\beta L V_1.
\end{align*}
Here, the first inequality is by the fact that there are at most $T/L$ non-overlapping intervals with length at least $L$ (for the first term) and triangle inequality (for the second term);
the second inequality holds by taking the expectation of the indicator $B_t=1$ and the {\it obliviousness of the adversary} (the only place obliviousness is required);
and the third inequality holds since by \pref{lem:stability} $x_{t, i} \leq 2x_{t-1,i}$ which is at most $2\beta$ for all $i\in \calS_{t-1}$ and thus $w_{t,i} \leq x_{t,i} + \alpha_t \sum_{j\in\calS_{t-1}} x_{t,j} \leq 2\beta + 2K\alpha_t\beta \leq 4\beta$ by the condition $\alpha_t \leq  \alpha \leq \frac{1}{K}$.
Picking the optimal $L$ gives
\begin{equation}\label{eqn:term_with_T}
\E\left[\sum_{t: B_{t+1}=1} |\ell_{t+1,i_{\tau(t)}}-\ell_{\tau(t),i_{\tau(t)}}| \right]
= \order\left(\sqrt{\beta TV_1}\right).
\end{equation}
Finally, combining Eq.~\eqref{eqn:OMD_bound}, \eqref{eqn:entropy_stability}, \eqref{eqn:log_barrier_stability}, \eqref{eqn:bias2}, and \eqref{eqn:term_with_T},
and using $\alpha = 8\eta$ proves the theorem.
\end{proof}

\subsection{Technical lemmas}
\begin{lemma}[Multiplicative Stability]
\label{lem:stability}
If $\eta \leq \min\left\{\frac{1}{K}, \frac{1}{162}\right\}$,
\pref{line:OMD1} and \pref{line:OMD2} of \pref{alg:L1_norm} ensure $\max\left(\frac{x_{t+1,i}}{x_{t,i}}, \frac{x_{t,i}}{x_{t+1,i}}\right)\leq 2$ for all $t$ and $i$.
\end{lemma}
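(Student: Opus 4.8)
The plan is to bound separately the multiplicative change caused by each of the two mirror-descent updates \pref{line:OMD1} and \pref{line:OMD2}, show that each stays within a factor $1+\order(\eta)$, and then compose the three factors in $\frac{x_{t+1,i}}{x_{t,i}}=\frac{x_{t+1,i}}{x_{t+1,i}'}\cdot\frac{x_{t+1,i}'}{x_{t,i}'}\cdot\frac{x_{t,i}'}{x_{t,i}}$ — the first and third coming from \pref{line:OMD2} (at rounds $t$ and $t-1$), the middle from \pref{line:OMD1} at round $t$ — so that the product lies in $[\tfrac12,2]$ once $\eta\le\tfrac1{162}$. Since $\psi$ is a barrier on $\Delta_K$, both updates land in the relative interior and their first-order conditions yield Lagrange multipliers $\mu',\mu''\in\fR$ with $\nabla\psi(x_{t+1}')=\nabla\psi(x_t')-\hat\ell_t+\mu'\one$ and $\nabla\psi(x_{t+1})=\nabla\psi(x_{t+1}')-m_{t+1}+\mu''\one$. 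Writing $h(z)\defeq-\tfrac1z+K\ln z$, so that the $i$-th coordinate of $\eta\,\partial\psi(x)$ equals $h(x_i)+K$, these read coordinatewise $h(x_{t+1,i}')=h(x_{t,i}')+\eta(\mu'-\hat\ell_{t,i})$ and $h(x_{t+1,i})=h(x_{t+1,i}')+\eta(\mu''-m_{t+1,i})$. The only properties of $h$ I would use are that it is strictly increasing on $(0,\infty)$ and that, for $z\le 1$, the relation $h(z')=h(z)+\delta$ with $|\delta z|\le\rho<1$ forces $\max(z'/z,z/z')\le\tfrac1{1-\rho}$ (drop the nonnegative $K\ln$ contribution and use $z\le1$).

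For \pref{line:OMD2} this is immediate. Every $m_{t+1,i}$ is an observed loss or $0$, hence $m_{t+1}\in[0,1]^K$, and a standard argument pins $\mu''\in[\min_i m_{t+1,i},\max_i m_{t+1,i}]\subseteq[0,1]$: if $\mu''$ were above that range then, by monotonicity of $h$, every coordinate of $x_{t+1}$ would strictly exceed the corresponding coordinate of $x_{t+1}'$, contradicting $\sum_i x_{t+1,i}=\sum_i x_{t+1,i}'=1$, and symmetrically from below. Therefore $|\eta(\mu''-m_{t+1,i})|\le\eta$, and the monotone inequality above gives $\max\!\big(\tfrac{x_{t+1,i}}{x_{t+1,i}'},\tfrac{x_{t+1,i}'}{x_{t+1,i}}\big)\le\tfrac1{1-\eta}$ for all $t,i$; in particular $\max(x_{t,i}'/x_{t,i},\,x_{t,i}/x_{t,i}')\le\tfrac1{1-\eta}$, which feeds into the next step.

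For \pref{line:OMD1} the difficulty is that $\hat\ell_{t,i_t}=\tfrac{\ell_{t,i_t}-m_{t,i_t}}{w_{t,i_t}}+m_{t,i_t}$ can be as large as $\tfrac1{w_{t,i_t}}$, which is unbounded, so $\mu'$ cannot simply be confined to the hull of the loss coordinates. The key observation is that the per-coordinate \emph{scaled} loss $\eta\,x_{t,i}'|\hat\ell_{t,i}|$ is nonetheless $\order(\eta)$ for every $i$: for $i\ne i_t$ we have $\hat\ell_{t,i}=m_{t,i}\in[0,1]$, so $\eta x_{t,i}'|\hat\ell_{t,i}|\le\eta$; for $i=i_t$, by \pref{line:bias2} and $\alpha=8\eta$ one has $w_{t,i_t}\ge(1-8\eta)x_{t,i_t}$, and combining with $x_{t,i_t}'\le\tfrac1{1-\eta}x_{t,i_t}$ from the previous step gives $x_{t,i_t}'|\hat\ell_{t,i_t}|\le\tfrac{1}{(1-\eta)(1-8\eta)}+1=\order(1)$, hence $\eta x_{t,i_t}'|\hat\ell_{t,i_t}|\le\tfrac14$ (indeed $\order(\eta)$) once $\eta\le\tfrac1{162}$. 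This is exactly the precondition of the standard multiplicative-stability lemma for log-barrier mirror descent (cf.\ \citep{wei2018more}), whose proof controls the simplex multiplier $\mu'$ under precisely this assumption and, in its quantitative form, turns a scaled-loss bound of $\epsilon$ into a coordinate ratio within $1\pm\order(\epsilon)$; applied with $\epsilon=\order(\eta)$ it yields $\max\!\big(\tfrac{x_{t+1,i}'}{x_{t,i}'},\tfrac{x_{t,i}'}{x_{t+1,i}'}\big)\le1+\order(\eta)$. Since the extra Shannon-entropy term in our hybrid $\psi$ only makes $\nabla^2\psi$ pointwise larger, the same argument applies verbatim (it can only improve stability).

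Putting the three estimates together, $\tfrac{x_{t+1,i}}{x_{t,i}}$ and its reciprocal are at most $\big(\tfrac1{1-\eta}\big)^2(1+\order(\eta))\le 2$ for $\eta\le\tfrac1{162}$, which is the claim; the base case $t=1$ is trivial since $x_1'=x_1=w_1$ is uniform. There is no circularity: the bound for \pref{line:OMD2} holds unconditionally, and it is the only fact (at round $t-1$) used to verify the precondition for \pref{line:OMD1} at round $t$, so the argument is purely ``local'' in $t$. The main obstacle — and the technical heart of the proof — is the \pref{line:OMD1} step, i.e.\ establishing (or invoking and reproving) the log-barrier multiplicative-stability lemma: showing that an $\order(\eta)$ per-coordinate scaled loss forces $\mu'$ to stay close enough to the ``bulk'' of the loss values that every coordinate ratio remains $1+\order(\eta)$, even though one coordinate of $\hat\ell_t$ may be arbitrarily large. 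This is where the bias construction (guaranteeing $w_{t,i}\ge(1-8\eta)x_{t,i}$) and the constant $\tfrac1{162}$ enter.
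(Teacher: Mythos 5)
Your proof takes a genuinely different route from the paper's, and most of it is sound, but there is one substantive gap worth flagging. On the different-route side: you decompose $\frac{x_{t+1,i}}{x_{t,i}}$ into three factors through $x_t'$ and $x_{t+1}'$, and for the two \pref{line:OMD2}-type factors you give a direct KKT argument that pins the simplex Lagrange multiplier $\mu''\in[\min_i m_{t+1,i},\max_i m_{t+1,i}]\subseteq[0,1]$ via monotonicity of $h$ and the constraint $\sum_i x_{t+1,i}=\sum_i x_{t+1,i}'$, then converts the coordinatewise relation $h(x_{t+1,i})=h(x_{t+1,i}')+\eta(\mu''-m_{t+1,i})$ into a ratio bound. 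That part is correct and quite clean; the monotone inequality you state (with $|\delta z|\le\rho$ and $z,z'\le 1$) does hold for $h(z)=-1/z+K\ln z$. The paper instead handles \emph{both} factors with a single general-purpose lemma (\pref{lemma: auxiliary}) and, crucially, uses a \emph{two-factor} decomposition $\frac{x_{t+1,i}}{x_{t,i}}=\frac{x_{t+1,i}}{x_{t+1,i}'}\cdot\frac{x_{t+1,i}'}{x_{t,i}}$: since $x_t=\argmin_x\inner{x,m_t}+D_\psi(x,x_t')$ and $x_{t+1}'=\argmin_x\inn{x,\hat\ell_t}+D_\psi(x,x_t')$ are \emph{two outputs of mirror-descent steps from the same base point $x_t'$}, the ratio $x_{t+1,i}'/x_{t,i}$ is controlled by \pref{lemma: auxiliary} with perturbation $\|\hat\ell_t-m_t\|_{\nabla^{-2}\psi(x_t)}$, a local-norm quantity that is small ($<1/9$) because $\hat\ell_t-m_t$ is supported on the single coordinate $i_t$ and $x_{t,i_t}/w_{t,i_t}\le 1/(1-\alpha)$. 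Likewise $x_{t+1}$ and $x_{t+1}'$ are both outputs of updates from $x_{t+1}'$ (with linear terms $m_{t+1}$ and $0$). \pref{lemma: auxiliary} itself is proved by a Taylor-expansion contradiction argument exploiting $\nabla^2\psi(x)_{ii}\ge 1/(\eta x_i^2)$, which is exactly the step that remains valid for the hybrid regularizer.

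The gap in your proposal is the \pref{line:OMD1} step, which you call ``the technical heart'' and then invoke rather than prove. Bounding the ratio $x_{t+1,i}'/x_{t,i}'$ (base to output of a single OMD step) requires controlling the simplex multiplier $\mu'$ in $h(x_{t+1,i}')=h(x_{t,i}')+\eta(\mu'-\hat\ell_{t,i})$, and unlike in the \pref{line:OMD2} case the na\"ive hull bound only gives $\mu'\in[\min_i\hat\ell_{t,i},\max_i\hat\ell_{t,i}]$, which can be $\pm 1/w_{t,i_t}$ — not $O(1)$. Your per-coordinate ``scaled loss'' bound $\eta x_{t,i}'|\hat\ell_{t,i}|=O(\eta)$ is a plausible precondition, but it is not the same as a bound on $|\mu'|$ or on $|\eta(\mu'-\hat\ell_{t,i})x_{t,i}'|$ uniformly in $i$, and the two-sided version of the log-barrier stability lemma you cite (accommodating possibly very negative $\hat\ell_{t,i_t}$, and under the hybrid $\psi$ rather than pure log-barrier) would need to be stated and proved to close the argument. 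The paper sidesteps this entirely: by comparing two outputs from the common base $x_t'$ rather than base-to-output, the offending large coordinate of $\hat\ell_t$ appears only as $(\hat\ell_t-m_t)_{i_t}$ weighted by $x_{t,i_t}$ in a local norm, and the multiplier never needs to be bounded separately. If you want to keep your three-factor decomposition, you should state and prove the base-to-output stability lemma for the hybrid regularizer (a proof along the lines of \pref{lemma: auxiliary}, adapted to the perturbed base point, would work); otherwise, switching to the paper's two-factor decomposition removes the hardest step.
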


To prove this lemma we make use of the following auxiliary result,
where we use the notation $\norm{a}_M = \sqrt{a^\top M a}$ for a vector $a \in \fR^K$ and a positive semi-definite matrix $M \in \fR^{K\times K}$.
\begin{lemma}
\label{lemma: auxiliary}
For some arbitrary $b_1, b_2 \in \fR^K$, $a_0 \in \Delta_K$
and $\psi$ defined as in \pref{alg:L1_norm} with $\eta \leq \frac{1}{162}$,
define
\begin{align*}
\begin{cases}
a_1 = \argmin_{a\in \Delta_K} F_1(a), \quad \text{where\ } F_1(a)\triangleq \inn{a, b_1} + D_\psi(a, a_0), \\
a_2 = \argmin_{a\in \Delta_K} F_2(a), \quad \text{where\ } F_2(a)\triangleq \inn{a, b_2} + D_\psi(a, a_0). 
\end{cases}
\end{align*}
Then as long as $\|b_1-b_2\|_{\nabla^{-2} \psi(a_1)} \leq \frac{1}{9}$, we have for all $i\in[K]$, $\max\left\{\frac{a_{2,i}}{a_{1,i}}, \frac{a_{1,i}}{a_{2,i}}\right\}\leq \frac{27}{26}$. 
\end{lemma}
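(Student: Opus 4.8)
The plan is a homotopy (interpolation) argument in the linear term. For $s\in[0,1]$ set $b(s)=(1-s)b_1+sb_2$ and $a(s)=\argmin_{a\in\Delta_K}\inn{a,b(s)}+D_\psi(a,a_0)$, so that $a(0)=a_1$ and $a(1)=a_2$; since $\psi$ contains the log-barrier, $a(s)$ lies in the relative interior of $\Delta_K$ and the optimality condition reads $\nabla\psi(a(s))+b(s)-\nabla\psi(a_0)=\mu(s)\one$ for a scalar multiplier $\mu(s)$, with $a(\cdot)$ being $C^1$ by the implicit function theorem (as $\nabla^2\psi\succ0$). Differentiating this identity together with the constraint $\one^\top a(s)=1$ in $s$ gives $\dot a(s)=[\nabla^2\psi(a(s))]^{-1}\bigl(\dot\mu(s)\one-(b_2-b_1)\bigr)$ with $\dot\mu(s)=\frac{\one^\top[\nabla^2\psi(a(s))]^{-1}(b_2-b_1)}{\one^\top[\nabla^2\psi(a(s))]^{-1}\one}$. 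Because $\nabla^2\psi(a)$ is diagonal with entries $[\nabla^2\psi(a)]_{ii}=\frac{1+Ka_i}{\eta a_i^2}$, this collapses to the coordinate-wise identity
\[
\frac{\dot a_i(s)}{a_i(s)}=\frac{\eta a_i(s)}{1+Ka_i(s)}\bigl(\dot\mu(s)-(b_{2,i}-b_{1,i})\bigr),
\]
and the goal becomes to show $\bigl|\tfrac{d}{ds}\ln a_i(s)\bigr|=O(\sqrt\eta)$ for every $i$; integrating over $[0,1]$ and using $\eta\le\tfrac1{162}$ then gives $\max_i|\ln a_{2,i}-\ln a_{1,i}|<\ln\tfrac{27}{26}$.

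To estimate the right-hand side I would bound the two pieces separately, always in the local norm $\norm{\cdot}_{\nabla^{-2}\psi(a(s))}$. Bounding a single coordinate by the whole local norm gives $|b_{2,i}-b_{1,i}|\le\frac1{a_i}\sqrt{(1+Ka_i)/\eta}\cdot\norm{b_1-b_2}_{\nabla^{-2}\psi(a(s))}$, so the damping factor $\frac{\eta a_i}{1+Ka_i}$ exactly cancels the possibly huge magnitude of $b_{2,i}-b_{1,i}$ when $a_i$ is tiny, leaving $\frac{\eta a_i}{1+Ka_i}|b_{2,i}-b_{1,i}|\le\sqrt\eta\,\norm{b_1-b_2}_{\nabla^{-2}\psi(a(s))}$. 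For the multiplier drift, Cauchy--Schwarz in the local metric gives $|\dot\mu(s)|\le\norm{b_2-b_1}_{\nabla^{-2}\psi(a(s))}/\norm{\one}_{\nabla^{-2}\psi(a(s))}$, and since $\norm{\one}_{\nabla^{-2}\psi(a(s))}^2=\sum_i\frac{\eta a_i(s)^2}{1+Ka_i(s)}\ge\frac{\eta}{2K^2}$ (as $\sum_i a_i(s)=1$, some coordinate has mass $\ge1/K$) while $\frac{\eta a_i}{1+Ka_i}\le\frac{\eta}{K}$, this again yields $\frac{\eta a_i}{1+Ka_i}|\dot\mu(s)|\le\sqrt{2\eta}\,\norm{b_1-b_2}_{\nabla^{-2}\psi(a(s))}$, uniformly in $i$. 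Adding the two bounds shows $\bigl|\tfrac{d}{ds}\ln a_i(s)\bigr|\le(1+\sqrt2)\sqrt\eta\,\norm{b_1-b_2}_{\nabla^{-2}\psi(a(s))}$.

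The remaining, and main, obstacle is that the hypothesis $\norm{b_1-b_2}_{\nabla^{-2}\psi(a_1)}\le\tfrac19$ is stated at $a_1=a(0)$, whereas the bound above involves $\norm{b_1-b_2}_{\nabla^{-2}\psi(a(s))}$. I would close this gap with a bootstrap: let $s^\star$ be the supremum of $s$ such that $\max_i\max\{a_i(s')/a_{1,i},\,a_{1,i}/a_i(s')\}\le\tfrac{27}{26}$ for all $s'\le s$. On $[0,s^\star]$ every coordinate ratio lies in $[\tfrac{26}{27},\tfrac{27}{26}]$, and since $[\nabla^2\psi(a)]_{ii}=\frac{1+Ka_i}{\eta a_i^2}$ changes by at most a factor $(\tfrac{27}{26})^2$ under such a perturbation of each $a_i$, we get $\norm{b_1-b_2}_{\nabla^{-2}\psi(a(s))}\le\tfrac{27}{26}\cdot\tfrac19$ throughout $[0,s^\star]$. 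Feeding this into the estimate above gives $\bigl|\tfrac{d}{ds}\ln a_i(s)\bigr|\le(1+\sqrt2)\cdot\tfrac{27}{234}\cdot\sqrt\eta$ on $[0,s^\star]$, hence $\max_i|\ln a_i(s^\star)-\ln a_{1,i}|\le(1+\sqrt2)\cdot\tfrac{27}{234}\cdot\tfrac1{\sqrt{162}}<\ln\tfrac{27}{26}$; if $s^\star<1$ this \emph{strict} inequality together with continuity of $a(\cdot)$ contradicts the maximality of $s^\star$, so $s^\star=1$ and the bound holds at $a_2=a(1)$. I expect the only delicate point to be the constant bookkeeping (checking $(1+\sqrt2)\cdot\tfrac{27}{234}/\sqrt{162}<\ln(27/26)$). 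A more direct attack through the two optimality conditions---solving for the ratios $a_{2,i}/a_{1,i}$ and bounding the multiplier gap $\mu_1-\mu_2$ via extreme coordinates---is tempting but fragile: that bound on $\mu_1-\mu_2$ degrades badly when some $a_{1,i}$ is tiny, whereas routing $\dot\mu$ through the local metric keeps every term at scale $\sqrt\eta$.
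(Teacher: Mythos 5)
Your proposal is correct, but it takes a genuinely different route from the paper. The paper argues by contradiction in the local norm: it assumes $\|a_1-a_2\|_{\nabla^2\psi(a_1)}>\tfrac13$, picks the point $a_2'$ on the segment at local distance exactly $\tfrac13$, Taylor-expands $F_2$ around $a_1$, controls the linear term by H\"older plus the first-order optimality of $a_1$ and the hypothesis $\|b_1-b_2\|_{\nabla^{-2}\psi(a_1)}\le\tfrac19$, lower-bounds the quadratic term using the fact that at local distance $\tfrac13$ the Hessian has only moved by a $(27/26)^2$ factor, and concludes $F_2(a_2')>F_2(a_1)$, contradicting convexity and the optimality of $a_2$; the coordinate-ratio bound then falls out of $\|a_1-a_2\|_{\nabla^2\psi(a_1)}\le\tfrac13$ via the log-barrier part of the Hessian alone. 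Your homotopy argument instead differentiates the KKT system along $b(s)=(1-s)b_1+sb_2$, bounds $|\tfrac{d}{ds}\ln a_i(s)|$ coordinate-wise (handling the Lagrange-multiplier drift via Cauchy--Schwarz in the local metric), and closes with a bootstrap on the local norm. Your derivative computation, the bound $\frac{\eta a_i}{1+Ka_i}|b_{2,i}-b_{1,i}|\le\sqrt{\eta}\,\|b_1-b_2\|_{\nabla^{-2}\psi(a(s))}$, the multiplier bound via $\|\one\|^2_{\nabla^{-2}\psi}\ge\eta/(2K^2)$, and the continuity argument are all sound; the only slip is in the bootstrap constant: since $[\nabla^2\psi(a)]_{ii}=\frac{1+Ka_i}{\eta a_i^2}$, a coordinate-wise perturbation by a factor $27/26$ moves the Hessian entries by up to $(27/26)^3$, so the dual local norm inflates by $(27/26)^{3/2}$ rather than $27/26$; the final numerical inequality $(1+\sqrt2)(27/26)^{3/2}\cdot\tfrac19\cdot\tfrac1{\sqrt{162}}\approx 0.022<\ln\tfrac{27}{26}\approx 0.038$ still holds with room to spare. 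Your route is heavier (it needs the implicit function theorem and smoothness of the solution path, which hold here because the log-barrier keeps $a(s)$ in the relative interior) but yields per-coordinate control directly and a slightly sharper constant; the paper's route is shorter, purely variational, and avoids differentiating the argmin.
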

\begin{proof}[of Lemma~\ref{lemma: auxiliary}]
First, we prove $\|a_1-a_2\|_{\nabla^2 \psi(a_1)}\leq \frac{1}{3}$ by contradiction. Assume $\|a_1-a_2\|_{\nabla^2 \psi(a_1)} > \frac{1}{3}$. Then there exists some $a_2'$ lying in the line segment between $a_1$ and $a_2$ such that $\|a_1-a_2'\|_{\nabla^2 \psi(a_1)}=\frac{1}{3}$. By Taylor's theorem, there exists $\overline{a}$ that lies in the line segment between $a_1$ and $a_2'$ such that 

\begin{align}
F_2(a_2') 
&= F_2(a_1) + \inn{\nabla F_2(a_1), a_2'-a_1} + \frac{1}{2}\|a_2'-a_1\|_{\nabla^2 F_2(\overline{a})}^2 \nonumber \\
&= F_2(a_1) + \inn{b_2-b_1, a_2'-a_1} + \inn{\nabla F_1(a_1), a_2'-a_1} + \frac{1}{2}\|a_2'-a_1\|_{\nabla^2 \psi(\overline{a})}^2  \nonumber \\
&\geq F_2(a_1) - \|b_2-b_1\|_{\nabla^{-2} \psi(a_1)} \|a_2'-a_1\|_{\nabla^2 \psi(a_1)} + \frac{1}{2}\|a_2'-a_1\|_{\nabla^2 \psi(\overline{a})}^2 \nonumber \\
&\geq F_2(a_1) - \frac{1}{9}\times \frac{1}{3} + \frac{1}{2}\|a_2'-a_1\|_{\nabla^2 \psi(\overline{a})}^2     \label{eqn: to contradiction}
\end{align}
where in the first inequality we use H\"{o}lder inequality and the first-order optimality condition, 
and in the last inequality we use the conditions
$\|b_1-b_2\|_{\nabla^{-2} \psi(a_1)} \leq \frac{1}{9}$
and $\|a_1-a_2'\|_{\nabla^2 \psi(a_1)}=\frac{1}{3}$.  
Note that $\nabla^2 \psi(x)$ is a diagonal matrix
and $\nabla^2 \psi(x)_{ii} = \frac{1}{\eta}\frac{1}{x_i^2} + \frac{K}{\eta}\frac{1}{x_i} \geq  \frac{1}{\eta}\frac{1}{x_i^2}$. Therefore for any $i\in[K]$, 
\begin{align*}
\frac{1}{3} =  \|a_2'-a_1\|_{\nabla^2 \psi(a_1)}
\geq \sqrt{\sum_{j=1}^K \frac{(a_{2,j}'-a_{1,j})^2}{\eta a_{1,j}^2}}
\geq \frac{|a_{2,i}'-a_{1,i}|}{\sqrt{\eta} a_{1,i}}
\end{align*}
 and thus $\frac{|a_{2,i}'-a_{1,i}|}{ a_{1,i}} \leq \frac{\sqrt{\eta}}{3}\leq \frac{1}{27}$, which implies $\max\left\{ \frac{a_{2,i}'}{a_{1,i}}, \frac{a_{1,i}}{a_{2,i}'} \right\}\leq \frac{27}{26}$. Thus the last term in \pref{eqn: to contradiction} can be lower bounded by
\begin{align*}
 \|a_2'-a_1\|_{\nabla^2 \psi(\overline{a})}^2 
&= \frac{1}{\eta}\sum_{i=1}^K \left(\frac{1}{\overline{a}_i^2}+\frac{K}{\overline{a}_i} \right)(a_{2,i}'-a_{1,i})^2\geq  \frac{1}{\eta} \left(\frac{26}{27}\right)^2 \sum_{i=1}^K \left(\frac{1}{a_{1,i}^2}+\frac{K}{a_{1,i}} \right)(a_{2,i}'-a_{1,i})^2 \\
&\geq 0.9 \|a_2'-a_1\|_{\nabla^2 \psi(a_1)}^2 = 0.9\times \left(\frac{1}{3}\right)^2 = 0.1. 
\end{align*}
Combining with \pref{eqn: to contradiction} gives
\begin{align*}
F_2(a_2')\geq F_2(a_1) - \frac{1}{27} + \frac{1}{2}\times 0.1 > F_2(a_1). 
\end{align*}
Recall that $a_2'$ is a point in the line segment between $a_1$ and $a_2$. By the convexity of $F_2$, the above inequality implies $F_2(a_1)<F_2(a_2)$, contradicting the optimality of $a_2$. 

Thus we conclude $\|a_1-a_2\|_{\nabla^2 \psi(a_1)} \leq \frac{1}{3}$. Since $\|a_1-a_2\|_{\nabla^2 \psi(a_1)} \geq \frac{|a_{2,i}-a_{1,i}|}{\sqrt{\eta} a_{1,i}}$ for all $i$ according to previous discussions, we get $ \frac{|a_{2,i}-a_{1,i}|}{\sqrt{\eta} a_{1,i}} \leq \frac{\sqrt{\eta}}{3}\leq \frac{1}{27}$, which implies $\max\left\{\frac{a_{2,i}}{a_{1,i}}, \frac{a_{1,i}}{a_{2,i}}\right\}\leq \frac{27}{26}$. 
\end{proof}

\begin{proof}[of Lemma~\ref{lem:stability}]
We prove the following two stability inequalities
\begin{align}
\max\left\{\frac{x_{t,i}}{x_{t+1,i}'}, \frac{x_{t+1,i}'}{x_{t,i}}\right\}\leq \frac{27}{26} \label{eqn: to_prove1}\\
\max\left\{\frac{x_{t+1,i}}{x_{t+1,i}'}, \frac{x_{t+1,i}'}{x_{t+1,i}}\right\}\leq \frac{27}{26} \label{eqn: to_prove2}, 
\end{align}
which clearly implies the lemma since then $\max\left\{\frac{x_{t,i}}{x_{t+1,i}}, \frac{x_{t+1,i}}{x_{t,i}}\right\}\leq \frac{27}{26}\times \frac{27}{26}\leq 2$. To prove \pref{eqn: to_prove1}, observe that
\begin{align}
    \begin{cases}
          x_{t} = \argmin_{x\in \Delta_K} \inner{x, m_t} + D_\psi(x,x_t'),  \\
          x_{t+1}' =  \argmin_{x\in \Delta_K} \inn{x, \hat{\ell}_t} + D_\psi(x,x_t').    
    \end{cases}
    \label{eqn: update rule 1}
\end{align}
To apply Lemma~\ref{lemma: auxiliary} and obtain \pref{eqn: to_prove1}, we only need to show $\|\hat{\ell}_t-m_t\|_{\nabla^{-2}\psi(x_t)} \leq \frac{1}{9}$.  
Recall $\nabla^2 \psi(u)_{ii} = \frac{1}{\eta}\frac{1}{u_i^2} + \frac{K}{\eta}\frac{1}{u_i}$. By our algorithm we have
\begin{align*}
\|\hat{\ell}_t-m_t\|_{\nabla^{-2} \psi(x_t)}^2 
&\leq \sum_{i=1}^K \eta x_{t,i}^2 \left(\frac{(\ell_{t,i}-m_{t,i}) \one\{i_t=i\}}{w_{t,i}}\right)^2 \\
&\leq \sum_{i=1}^K \frac{\eta}{(1-\alpha)^2} x_{t,i}^2 \left(\frac{(\ell_{t,i}-m_{t,i}) \one\{i_t=i\}}{x_{t,i}}\right)^2  \tag{$\frac{x_{t,i}}{w_{t,i}}\leq \frac{1}{1-\alpha_t}\leq \frac{1}{1-\alpha}\ \forall i$} \\
&\leq \frac{\eta}{(1-\alpha)^2} \leq \frac{\frac{1}{162}}{(1-\frac{8}{162})^2} < \frac{1}{81},
\end{align*}
finishing the proof for \pref{eqn: to_prove1}.
To prove \pref{eqn: to_prove2}, we observe: 
\begin{align}
    \begin{cases}
          x_{t+1}' = \argmin_{x\in \Delta_K} D_\psi(x,x_{t+1}'), \\
          x_{t+1} =  \argmin_{x\in \Delta_K} \inner{x, m_{t+1}} + D_\psi(x,x_{t+1}'). 
    \end{cases}
    \label{eqn: update rule 2}
\end{align}
Similarly, with the help of Lemma~\ref{lemma: auxiliary}, we only need to show $\|m_{t+1}\|_{\nabla^{-2}\psi(x_{t+1}')}\leq \frac{1}{9}$. This can be seen by 
\begin{align*}
  \|m_{t+1}\|_{\nabla^{-2}\psi(x_{t+1}')}^2 \leq \sum_{i=1}^K \eta x_{t+1,i}'^2 m_{t+1,i}^2 \leq \eta \leq \frac{1}{81}. 
\end{align*}
This finishes the proof.
\end{proof}

\begin{lemma}
\label{lem:OMD}
If $\eta \leq \min\left\{\frac{1}{K}, \frac{1}{162}\right\}$,
\pref{line:OMD1} and \pref{line:OMD2} of \pref{alg:L1_norm} ensure for any arm $i^*$,
\[
\E\left[\sum_{t=1}^T \inn{x_t-\basis_{i^*}, {\ell}_t}\right]
\leq \mathcal{O}\left(\frac{K\ln T}{\eta}\right) + 
4\eta\E\left[\sum_{t:i_t\notin \calS_{t-1}} \frac{(\ell_{t,i_t}-m_{t,i_t})^2}{Kx_{t,i_t}} 
+ \sum_{t: i_t\in \calS_{t-1}} (\ell_{t,i_t}-m_{t,i_t})^2 \right].
\]
\end{lemma}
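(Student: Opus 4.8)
The plan is to derive the stated bound from the standard optimistic mirror descent (OMD) regret inequality, specialized to the hybrid regularizer $\psi(x) = \frac{1}{\eta}\sum_i \ln\frac{1}{x_i} + \frac{K}{\eta}\sum_i x_i \ln x_i$. First I would write down the generic one-step OMD bound for the two-update scheme of \pref{line:OMD1} and \pref{line:OMD2}: for any comparator $\basis_{i^*}$,
\[
\sum_{t=1}^T \inn{x_t - \basis_{i^*}, \ell_t}
\leq \frac{D_\psi(\basis_{i^*}, x_1')}{1} + \sum_{t=1}^T \Big( \inn{x_t - x_{t+1}', \hat\ell_t - m_t} - D_\psi(x_{t+1}', x_t) - D_\psi(x_t, x_t') \Big),
\]
using that $\hat\ell_t$ is an unbiased estimator so $\E[\inn{x_t, \hat\ell_t}] = \E[\inn{x_t, \ell_t}]$ and $\E[\hat\ell_{t,i^*}] = \ell_{t,i^*}$ (here the predictable bias term $m_t$ cancels in expectation against the $x_t$-update). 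The leading term $D_\psi(\basis_{i^*}, x_1')$ is $\order(K\ln T / \eta)$ because the log-barrier part contributes $\frac{1}{\eta}\sum_i \ln\frac{1}{x_{1,i}^\star}$-type quantities — strictly speaking one must truncate the comparator slightly to $(1-\frac{1}{T})\basis_{i^*} + \frac{1}{KT}\mathbf{1}$ to keep $\ln\frac{1}{x_i}$ finite, which costs only an additive $\order(1)$ in regret — and the entropy part contributes $\order(K\ln K/\eta)$.

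The heart of the argument is bounding the per-round stability term $\inn{x_t - x_{t+1}', \hat\ell_t - m_t} - D_\psi(x_{t+1}', x_t)$. By multiplicative stability (\pref{lem:stability}, and in fact already $\|\hat\ell_t - m_t\|_{\nabla^{-2}\psi(x_t)} \le \frac19$ as shown in its proof) the local norm argument gives that this is bounded, up to constants, by $\eta \|\hat\ell_t - m_t\|^2_{\nabla^{-2}\psi(x_t)}$, i.e. by $\order(\eta) \sum_i \frac{(\hat\ell_{t,i} - m_{t,i})^2}{\nabla^2\psi(x_t)_{ii}}$. Now I would use the explicit Hessian $\nabla^2\psi(x_t)_{ii} = \frac{1}{\eta x_{t,i}^2} + \frac{K}{\eta x_{t,i}}$, which means $\frac{1}{\nabla^2\psi(x_t)_{ii}} = \frac{\eta x_{t,i}^2}{1 + K x_{t,i}} = \eta \min\!\big(x_{t,i}^2, \frac{x_{t,i}}{K}\big)$ up to a factor of $2$. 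The key case split is exactly the minority/majority partition: since $\hat\ell_{t,i} - m_{t,i}$ is supported on $i = i_t$ and equals $\frac{\ell_{t,i_t} - m_{t,i_t}}{w_{t,i_t}}$, the term becomes $\order(\eta^2) \cdot \frac{(\ell_{t,i_t}-m_{t,i_t})^2}{w_{t,i_t}^2} \cdot \min\!\big(x_{t,i_t}^2, \tfrac{x_{t,i_t}}{K}\big)$. When $i_t \in \calS_{t-1}$ (the minority case, $B_t = 1$), I use the branch $x_{t,i_t}^2$ and $\frac{x_{t,i_t}}{w_{t,i_t}} \le 1/(1-\alpha_t) \le \sqrt{4/3}$ (the same bound used in \pref{thm:max_norm_upper_bound}) to get $\order(\eta) (\ell_{t,i_t} - m_{t,i_t})^2$ after absorbing one $\eta$; when $i_t \notin \calS_{t-1}$ (majority, $B_t = 0$), I use the branch $\frac{x_{t,i_t}}{K}$ together with $\frac{x_{t,i_t}}{w_{t,i_t}^2} \le \frac{2}{x_{t,i_t}}\cdot(\text{const})$ — more carefully, $w_{t,i_t} \ge x_{t,i_t}/(1+\alpha) $ so $\frac{x_{t,i_t}^2}{w_{t,i_t}^2}\cdot\frac{1}{Kx_{t,i_t}} \le \frac{2}{Kx_{t,i_t}}$ — yielding $\order(\eta)\cdot\frac{(\ell_{t,i_t}-m_{t,i_t})^2}{K x_{t,i_t}}$. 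Summing over $t$ and separating according to $B_t$ gives exactly the two sums in the statement. Throughout, the negative divergence $-D_\psi(x_t, x_t')$ is simply discarded (it is $\le 0$), and $-D_\psi(x_{t+1}',x_t)$ is what is consumed to make the local-norm bound work.

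I expect the main obstacle to be the bookkeeping that keeps the entropy part of the regularizer from inflating the variance term: a pure log-barrier would already give the minority bound, but it is the entropy part's contribution $\frac{K}{\eta x_{t,i}}$ to the Hessian that shrinks the effective variance of the majority arms from $\order(\eta (\ell - m)^2 / x_{t,i}^2) \cdot x_{t,i}^2 = \order(\eta(\ell-m)^2)$ down to $\order(\eta(\ell-m)^2/(Kx_{t,i}))$ — wait, that is backwards in sign, so the precise point is that having \emph{both} terms in $\nabla^2\psi$ lets us take the minimum of the two bounds arm-by-arm, and the partition into $\calS_{t-1}$ vs.\ its complement is chosen precisely so that the favorable branch is available in each regime. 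Making this "$\min$" rigorous requires being careful that the local-norm OMD bound is applied with the Hessian at the right point and that $\eta \le 1/K$ is used where needed (e.g.\ to ensure $K x_{t,i}^2 \le x_{t,i}$ is \emph{not} what we want — rather, the condition $\eta\le 1/K$ enters through $\alpha = 8\eta \le 8/K$ controlling the $w_t$ vs.\ $x_t$ ratio and through the stability lemma's hypotheses). I would also double-check the standard subtlety that in the two-step update the $m_t$ term genuinely telescopes/cancels in expectation so that only $\hat\ell_t - m_t$ appears, not $\hat\ell_t$; this is routine but worth stating cleanly. The rest is the same Freedman/unbiasedness manipulation already used in \pref{thm:max_norm_upper_bound}.
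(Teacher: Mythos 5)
Your proposal follows essentially the same route as the paper's proof: the standard optimistic-OMD per-round inequality with the truncated comparator $(1-\frac1T)\basis_{i^*}+\frac{1}{KT}\one$, a local-norm bound on $\inn{x_t-x_{t+1}',\hat\ell_t-m_t}$ via the stability estimate, and then the explicit inverse Hessian $\big(\frac{1}{\eta x_i^2}+\frac{K}{\eta x_i}\big)^{-1}\le \eta\min\{x_i^2,\frac{x_i}{K}\}$ to extract the two branches, exactly as in the paper (which in fact proves the stronger bound with the pointwise $\min$ and then specializes by the minority/majority split). The only blemish is a bookkeeping conflation of where the factor $\eta$ lives: the local-norm step gives $\order(1)\cdot\norm{\hat\ell_t-m_t}^2_{\nabla^{-2}\psi(x_t)}$, not $\order(\eta)\cdot\norm{\hat\ell_t-m_t}^2_{\nabla^{-2}\psi(x_t)}$ (the $\eta$ is already inside the inverse Hessian), but since you then ``absorb'' the spurious $\eta$ and land on the correct $\order(\eta)\min\{\cdot,\cdot\}$ per round, the argument is sound.
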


\begin{proof}
We will in fact prove a stronger statement
\[
\E\left[\sum_{t=1}^T \inn{x_t-\basis_{i^*}, {\ell}_t}\right]
\leq \mathcal{O}\left(\frac{K\ln T}{\eta}\right) + 
4\eta\E\left[\sum_{t=1}^T \min\left\{ \frac{(\ell_{t,i_t}-m_{t,i_t})^2}{Kx_{t,i_t}}, 
(\ell_{t,i_t}-m_{t,i_t})^2\right\} \right],
\]
which clearly implies the stated bound.

By standard analysis of optimistic mirror descent (e.g., Lemma 6 in \citep{wei2018more}, Lemma 5 in \citep{chiang2012online}), we have
\begin{align}
\inn{x_t - u, \hat{\ell}_t} \leq D_{\psi}(u, x_t') - D_{\psi}(u, x_{t+1}') + \inn{x_t-x_{t+1}', \hat{\ell}_t-m_t}  \label{eqn: instantaneous regret}
\end{align}
for all $u\in \Delta_K$. Specifically, we pick $u=\left(1-\frac{1}{T}\right)\basis_{i^*} + \frac{1}{KT}\one$. The first two terms on the right hand side of \pref{eqn: instantaneous regret} telescope when summing over $t$. The non-negative remaining term is 
\begin{align*}
D_\psi(u, x_1') &= \psi(u) - \psi(x_1') - \inn{\nabla \psi(x_1'), u - x_1'} \\
&= \psi(u) - \psi(x_1') \leq \frac{K\ln T}{\eta} + \frac{K\ln K}{\eta} = \order\left(\frac{K\ln T}{\eta}\right),  
\end{align*}
where in the first equality we use $x_1'=\frac{1}{K}\one$ and $\psi$'s definition. 
Below we focus on the last term in \pref{eqn: instantaneous regret}. 
Acoording to \pref{line:OMD1} and \pref{line:OMD2} of \pref{alg:L1_norm}, we can write
\begin{align}
    \begin{cases}
          x_{t} = \argmin_{x\in \Delta_K} F(x), \quad \text{where\ } F(x)\triangleq \inner{x, m_t} + D_\psi(x,x_t'),  \\
          x_{t+1}' =  \argmin_{x\in \Delta_K}F'(x), \quad \text{where\ } F'(x)\triangleq \inn{x, \hat{\ell}_t} + D_\psi(x,x_t').    
    \end{cases}
    \label{eqn: update rule 3}
\end{align}
By Taylor's theorem, there exists some $\overline{x}$ that lies in the line segment between $x_t$ and $x_{t+1}'$, such that 
\begin{align}
F'(x_t)-F'(x_{t+1}') = \inn{\nabla F'(x_{t+1}'), x_t-x_{t+1}'} + \frac{1}{2}\|x_t-x_{t+1}'\|_{\nabla^2 F'(\overline{x})}^2 \geq \frac{1}{2}\|x_t-x_{t+1}'\|_{\nabla^2 \psi(\overline{x})}^2, \label{eqn: stability_term_1}
\end{align}
where the last inequality uses the optimality of $x_{t+1}'$ and that $\nabla^2 F' = \nabla^2 \psi$. As shown in the proof of Lemma~\ref{lem:stability}, $\max\left\{\frac{x_{t+1,i}'}{x_{t,i}}, \frac{x_{t,i}}{x_{t+1,i}'}\right\}\leq \frac{27}{26}$, so 
\begin{align}
 \frac{1}{2}\|x_t-x_{t+1}'\|_{\nabla^2 \psi(\overline{x})}^2 
&= \frac{1}{2}\sum_{i=1}^K \frac{1}{\eta}\left(\frac{1}{\overline{x}_i^2} + \frac{K}{\overline{x}_i}\right)(x_{t,i}-x_{t+1,i}')^2  \nonumber \\
&\geq \frac{1}{2}\left(\frac{26}{27}\right)^2\sum_{i=1}^K \frac{1}{\eta}\left(\frac{1}{x_{t,i}^2} + \frac{K}{x_{t,i}}\right)(x_{t,i}-x_{t+1,i}')^2 \nonumber \\
&\geq \frac{0.9}{2}\|x_t-x_{t+1}'\|_{\nabla^2 \psi(x_t)}^2 \label{eqn: stability_term_2}
\end{align}
On the other hand, 
\begin{align}
F'(x_t)-F'(x_{t+1}') &= \inn{x_t-x_{t+1}', \hat{\ell}_t-m_t} + F(x_t) - F(x_{t+1}') \nonumber \\
&\leq  \inn{x_t-x_{t+1}', \hat{\ell}_t-m_t} \tag{by the optimality of $x_t$} \nonumber \\
&\leq \|x_t-x_{t+1}'\|_{\nabla^2 \psi(x_t)} \|\hat{\ell}_t-m_t\|_{\nabla^{-2} \psi(x_t)}.  \label{eqn: stability_term_3}
\end{align}
Combining \pref{eqn: stability_term_1}, \pref{eqn: stability_term_2} and \pref{eqn: stability_term_3} we get 
\begin{align*}
\|x_t-x_{t+1}'\|_{\nabla^2 \psi(x_t)} \leq \frac{2}{0.9} \|\hat{\ell}_t-m_t\|_{\nabla^{-2} \psi(x_t)}, 
\end{align*}
and thus
\begin{align*}
 &\inn{x_t-x_{t+1}', \hat{\ell}_t-m_t} \\
&\leq \|x_t-x_{t+1}'\|_{\nabla^2 \psi(x_t)} \|\hat{\ell}_t-m_t\|_{\nabla^{-2} \psi(x_t)} \leq 3  \|\hat{\ell}_t-m_t\|_{\nabla^{-2} \psi(x_t)}^2  \\
  &\leq 3\eta \sum_{i=1}^K \frac{1}{\frac{1}{x_{t,i}^2} + \frac{K}{x_{t,i}}} (\hat{\ell}_{t,i}-m_{t,i})^2 \\
  &\leq 3\eta \sum_{i=1}^K \min\left\{ x_{t,i}^2 ,  \frac{x_{t,i}}{K}  \right\}(\hat{\ell}_{t,i}-m_{t,i})^2 \\
  &\leq 3\frac{\eta}{(1-\alpha)^2}  \sum_{i=1}^K \min\left\{ x_{t,i}^2,  \frac{x_{t,i}}{K} \right\} \left(\frac{(\ell_{t,i}-m_{t,i})\one\{i_t=i\}}{x_{t,i}}\right)^2  \tag{$\frac{x_{t,i}}{w_{t,i}}\leq \frac{1}{1-\alpha_t}\leq \frac{1}{1-\alpha}\ \forall i$}  \\
  &\leq 4\eta \min\left\{ (\ell_{t,i_t}-m_{t,i_t})^2, \frac{\ell_{t,i_t}-m_{t,i_t})^2}{Kx_{t,i_t}}   \right\}  \tag{$\alpha=8\eta \leq \frac{8}{162}$}. 
\end{align*}
We have thus showed
\begin{align*}
\sum_{t=1}^T \inn{w_t-u, \hat{\ell}_t} \leq \order\left(\frac{K\ln T}{\eta}\right) + 4\eta \sum_{t=1}^T \min\left\{ (\ell_{t,i_t}-m_{t,i_t})^2, \frac{\ell_{t,i_t}-m_{t,i_t})^2}{Kx_{t,i_t}}   \right\}. 
\end{align*}
Finally realizing by $u$'s definition,
\begin{align*}
\sum_{t=1}^T \inn{u-\basis_{i^*}, \hat{\ell}_t} = \frac{1}{T}\sum_{t=1}^T \inner{-\basis_{i^*}+\frac{1}{K}\one, \hat{\ell}_t},
\end{align*}
combining the two inequalities above and taking expectation finish the proof. 
\end{proof}

\begin{lemma}
\label{lem:bias}
\pref{line:bias2} of \pref{alg:L1_norm} ensures
\begin{align*}
\E\left[\sum_{t=1}^T \inn{w_t-x_t, {\ell}_t}\right] 
\leq \order(1) + \alpha \E\left[\sum_{t: i_{t+1}\in\calS_t} |\ell_{t+1,i_{\tau(t)}}-\ell_{\tau(t),i_{\tau(t)}}| \right]
- \frac{\alpha}{2}\E\left[\sum_{t=2}^T(c_{\tau(t)}-c_{\tau(t-1)})^2 \right].
\end{align*}
\end{lemma}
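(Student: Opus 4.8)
The plan is to track $w_t-x_t$ coordinate-by-coordinate, rewrite the minority bias as a clean ``$\e_{i_{\tau(t)}}$ versus a renormalized sub-distribution of $w_{t+1}$'' displacement, convert it to a conditional expectation over the drawn arm $i_{t+1}$, and then run the same split $\ell_{t+1,i_{\tau(t)}}-c_{t+1}=(\ell_{t+1,i_{\tau(t)}}-c_{\tau(t)})+(c_{\tau(t)}-c_{t+1})$ used in the proof of \pref{thm:max_norm_upper_bound}, now adapted to the fact that the bias only touches the minority group.

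First, since $w_1=x_1$ only $t\ge2$ contributes. Reading off \pref{line:bias2}, for every $t$ we have $w_{t+1}-x_{t+1}=\alpha_{t+1}\big(\sum_{j\in\calS_t\setminus\{i_{\tau(t)}\}}x_{t+1,j}\big)\e_{i_{\tau(t)}}-\alpha_{t+1}\sum_{i\in\calS_t\setminus\{i_{\tau(t)}\}}x_{t+1,i}\e_i$; substituting $w_{t+1,i}=(1-\alpha_{t+1})x_{t+1,i}$ for $i\in\calS_t\setminus\{i_{\tau(t)}\}$ and using $\tfrac{\alpha_{t+1}}{1-\alpha_{t+1}}=\alpha(1-c_{\tau(t)})$ yields the identity
\[
\inn{w_{t+1}-x_{t+1},\ell_{t+1}}=\alpha(1-c_{\tau(t)})\sum_{i\in\calS_t\setminus\{i_{\tau(t)}\}}w_{t+1,i}\big(\ell_{t+1,i_{\tau(t)}}-\ell_{t+1,i}\big),
\]
which holds whether or not $i_{\tau(t)}\in\calS_t$. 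Now $\calS_t,\,i_{\tau(t)},\,c_{\tau(t)},\,\ell_{t+1},\,w_{t+1}$ are all fixed once the history up to round $t$ is revealed, and $i_{t+1}\sim w_{t+1}$; hence the right-hand sum equals $\E\big[\one\{i_{t+1}\in\calS_t\setminus\{i_{\tau(t)}\}\}(\ell_{t+1,i_{\tau(t)}}-\ell_{t+1,i_{t+1}})\,\big|\,\cdot\big]$. On that event $i_{t+1}\in\calS_t$, so $\tau(t+1)=t+1$ and $\ell_{t+1,i_{t+1}}=c_{t+1}=c_{\tau(t+1)}$. Taking total expectation gives
\[
\E\Big[\textstyle\sum_{t=2}^T\inn{w_t-x_t,\ell_t}\Big]=\alpha\,\E\Big[\textstyle\sum_{t=1}^{T-1}\one\{i_{t+1}\in\calS_t\setminus\{i_{\tau(t)}\}\}\,(1-c_{\tau(t)})\big(\ell_{t+1,i_{\tau(t)}}-c_{\tau(t+1)}\big)\Big].
\]
Note this step uses only that $\ell_{t+1}$ is determined by the history up to round $t$, so — unlike the rest of the proof of \pref{thm:1_norm_upper_bound} — obliviousness is not needed for this lemma.

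Next, split $(1-c_{\tau(t)})(\ell_{t+1,i_{\tau(t)}}-c_{\tau(t+1)})=(1-c_{\tau(t)})(\ell_{t+1,i_{\tau(t)}}-c_{\tau(t)})+(1-c_{\tau(t)})(c_{\tau(t)}-c_{\tau(t+1)})$. Since $1-c_{\tau(t)}\in[0,1]$ and $c_{\tau(t)}=\ell_{\tau(t),i_{\tau(t)}}$, the first summand is at most $|\ell_{t+1,i_{\tau(t)}}-\ell_{\tau(t),i_{\tau(t)}}|$. For the second, write $g_{t+1}=c_{\tau(t)}-c_{\tau(t+1)}$, which vanishes unless $i_{t+1}\in\calS_t$, so
\[
\textstyle\sum_t\one\{i_{t+1}\in\calS_t\setminus\{i_{\tau(t)}\}\}(1-c_{\tau(t)})g_{t+1}=\sum_{t:\,i_{t+1}\in\calS_t}(1-c_{\tau(t)})g_{t+1}-\sum_{t:\,i_{t+1}=i_{\tau(t)}\in\calS_t}(1-c_{\tau(t)})g_{t+1}.
\]
In the first sum on the right we may extend the range to all $t$ (added terms are $0$) and apply the algebraic identity $(1-c)(c-c')=(c-c')-\tfrac12(c'-c)^2+\tfrac12(c'^2-c^2)$ with $c=c_{\tau(t)},\ c'=c_{\tau(t+1)}$; the two non-quadratic pieces telescope to $\order(1)$ (all $c_\cdot\in[0,1]$ and $\tau(\cdot)$ is non-decreasing), leaving exactly $-\tfrac12\sum_{t=2}^T(c_{\tau(t)}-c_{\tau(t-1)})^2$. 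On the event of the subtracted sum, $i_{t+1}=i_{\tau(t)}$ so $g_{t+1}=c_{\tau(t)}-c_{t+1}=\ell_{\tau(t),i_{\tau(t)}}-\ell_{t+1,i_{\tau(t)}}$, whence $|(1-c_{\tau(t)})g_{t+1}|\le|\ell_{t+1,i_{\tau(t)}}-\ell_{\tau(t),i_{\tau(t)}}|$. Finally the two surviving $|\ell_{t+1,i_{\tau(t)}}-\ell_{\tau(t),i_{\tau(t)}}|$-contributions are indexed by the disjoint events $\{i_{t+1}\in\calS_t,\ i_{t+1}\ne i_{\tau(t)}\}$ and $\{i_{t+1}\in\calS_t,\ i_{t+1}=i_{\tau(t)}\}$, whose union is $\{i_{t+1}\in\calS_t\}$; combining them reproduces the middle term $\alpha\E[\sum_{t:\,i_{t+1}\in\calS_t}|\ell_{t+1,i_{\tau(t)}}-\ell_{\tau(t),i_{\tau(t)}}|]$, and collecting everything yields the claimed bound.

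The main obstacle is that the bias here redistributes mass \emph{within} the minority group rather than mixing with a single point mass as in \pref{alg:max_norm}, so $w_{t+1}-x_{t+1}$ involves several arms' losses and the clean reduction of Step~1 must be verified carefully (in particular in the degenerate case $i_{\tau(t)}\notin\calS_t$). The delicate bookkeeping is then in the last step: the ``off-target'' event $\{i_{t+1}=i_{\tau(t)}\}$ is excluded by the bias indicator yet still advances $c_{\tau(\cdot)}$, and one must absorb it back into the single path-length term while ensuring the telescoping exploits only $c_\cdot\in[0,1]$ and the monotonicity of $\tau$, and keeping the sign of the quadratic term negative so that it cancels the variance term of \pref{eqn:log_barrier_stability} when $\alpha=8\eta$.
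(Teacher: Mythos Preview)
Your proof is correct and follows essentially the same route as the paper: derive the identity $\inn{w_{t}-x_{t},\ell_{t}}=\alpha(1-c_{\tau(t-1)})\sum_{i}w_{t,i}(\ell_{t,i_{\tau(t-1)}}-\ell_{t,i})$, pass to a conditional expectation over $i_t\sim w_t$, split $\ell_{t,i_\tau}-c_t$ through $c_\tau$, and telescope the quadratic part. The only difference is cosmetic: you sum over $\calS_t\setminus\{i_{\tau(t)}\}$ (making the indicator $\{i_{t+1}\in\calS_t\setminus\{i_{\tau(t)}\}\}$), whereas the paper sums over all of $\calS_{t-1}$, noting that the $i=i_\tau$ summand vanishes anyway; this lets the paper work directly with the event $\{i_t\in\calS_{t-1}\}$ and avoids your final ``off-target'' bookkeeping where you split off and then reabsorb the case $i_{t+1}=i_{\tau(t)}\in\calS_t$.
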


\begin{proof}
First fix any $t > 1$ and denote $\tau(t-1)$ by $\tau$ for notational convenience
(note that $\tau$ is thus fixed at the beginning of round $t$).
Note that by the construction of $w_t$, 
we have $w_{t, i} = x_{t,i}$ for any $i \notin \calS_{t-1}\cup \{i_\tau\}$
and also $\sum_{i\in \calS_{t-1}\cup \{i_\tau\}}x_{t,i} 
= \sum_{i\in \calS_{t-1}\cup \{i_\tau\}}w_{t,i}$.
Therefore
\begin{align*}
\inn{w_t-x_t, {\ell}_t} 
&= \alpha_t\left(
\left(\sum_{i\in \calS_{t-1}\cup \{i_\tau\}}x_{t,i}\right) {\ell}_{t,i_\tau} 
-\sum_{i\in \calS_{t-1}\cup \{i_\tau\}}x_{t,i}{\ell}_{t,i}
\right) \\
&= \alpha_t\left( 
\left(\sum_{i\in \calS_{t-1}\cup \{i_\tau\}}w_{t,i}\right) {\ell}_{t,i_\tau} 
-\sum_{i\in \calS_{t-1}\cup \{i_\tau\}}x_{t,i}{\ell}_{t,i} 
\right) \\
&= \alpha_t\left( 
\sum_{i\in \calS_{t-1}\cup\{i_\tau\}}w_{t,i} {\ell}_{t,i_\tau} 
-\sum_{i\in  \calS_{t-1}\cup \{i_\tau\}}w_{t,i}{\ell}_{t,i} \right) 
+ \alpha_t \inn{w_t-x_t, {\ell}_t}\\
&= \alpha_t\left( 
\sum_{i\in  \calS_{t-1}}w_{t,i} {\ell}_{t,i_\tau} 
-\sum_{i\in  \calS_{t-1}}w_{t,i}{\ell}_{t,i} \right) 
+ \alpha_t \inn{w_t-x_t, {\ell}_t}.  
\end{align*}
Rearranging and using the definition of $\alpha_t$ gives:  
\begin{align*}
    \inn{w_t-x_t, {\ell}_t} 
    &= \frac{\alpha_t}{1-\alpha_t}\left(\sum_{i\in  \calS_{t-1}}w_{t,i} {\ell}_{t,i_\tau} -\sum_{i\in   \calS_{t-1}}w_{t,i}{\ell}_{t,i} \right) 
    = \alpha(1-c_\tau)\left( \sum_{i\in  \calS_{t-1}}w_{t,i} ({\ell}_{t,i_\tau} - {\ell}_{t,i}) \right).
\end{align*}
Taking expectation gives: 
\begin{align*}
    &\E[\inn{w_t-x_t, {\ell}_t}] 
    = \alpha\E\left[(1-c_\tau)\left( \sum_{i\in  \calS_{t-1}}w_{t,i}(\ell_{t,i_\tau}-\ell_{t,i}) \right)\right] \\
    & = \alpha \E\left[\one\{i_t \in \calS_{t-1}\}(1-c_\tau)(\ell_{t,i_\tau}-\ell_{t,i_t}) \right]  \\
    &= \alpha\E\left[ \one\{i_t \in \calS_{t-1}\}(1-c_\tau) (\ell_{t,i_\tau} - c_\tau+c_\tau -c_t)) \right] \\ 
    &\leq \alpha\E\left[\one\{i_t \in \calS_{t-1}\} |\ell_{t,i_\tau}-c_{\tau}| \right] + \alpha \E\left[\one\{i_t \in \calS_{t-1}\}(c_\tau-c_t - c_\tau^2 + c_t c_\tau ) \right] \\
    &= \alpha\E\left[\one\{i_t \in \calS_{t-1}\} |\ell_{t,i_\tau}-\ell_{\tau,i_\tau}|  \right] + \alpha \E\left[c_{\tau(t-1)} -c_{\tau(t)}- c_{\tau(t-1)}^2 + c_{\tau(t)} c_{\tau(t-1)}  \right],
\end{align*}
where in the last step we use the fact that $\tau(t)$ is $t$ if $i_t\in S_{t-1}$ and is $\tau(t-1)$ otherwise.
Finally summing over $t$ and telescoping finish the proof.
\end{proof}

\section{Proof of \pref{thm:chasing}}\label{app:chasing}
Since $m_{t+1}$ is the projection of $m_{t}$ on $\calK_{t+1}$ and also $m_{t+1}^\star \in \calK_{t+1}$,
by the generalized Pythagorean theorem we have
\[
\norm{m_{t+1}-m_{t+1}^\star}_2^2 + \norm{m_{t+1}-m_{t}}_2^2 \leq \norm{m_{t} - m_{t+1}^\star}_2^2.
\]
On the other hand, by triangle inequality we also have
\begin{align*}
\norm{m_{t} - m_{t+1}^\star}_2^2
&\leq \left(\norm{m_{t} - m_{t}^\star}_2 + \norm{m_{t}^\star - m_{t+1}^\star}_2\right)^2 \\
&= \norm{m_{t} - m_{t}^\star}_2^2 + 2 \norm{m_{t} - m_{t}^\star}_2\norm{m_{t}^\star - m_{t+1}^\star}_2 + \norm{m_{t}^\star - m_{t+1}^\star}_2^2 \\
&\leq \norm{m_{t} - m_{t}^\star}_2^2 + 6\norm{m_{t}^\star - m_{t+1}^\star}_2. \tag{$\calK_t \subset \calB$}
\end{align*}
Combining the two inequalities above, summing over $t$, and telescoping give
\[
\sum_{t=1}^{T-1} \norm{m_{t+1}-m_{t}}_2^2
\leq \norm{m_{1} - m_{1}^\star}_2^2 
+ 6\sum_{t=1}^{T-1} \norm{m_{t}^\star - m_{t+1}^\star}_2
\leq 4 + 6\sum_{t=1}^{T-1} \norm{m_{t}^\star - m_{t+1}^\star}_2,
\]
finishing the proof.

\end{document}